\title{Minimax  AUC Fairness: Efficient Algorithm With Provable Convergence}
\author{
    Zhenhuan Yang\textsuperscript{\rm 1}\thanks{This work was mostly done while Zhenhuan Yang was a student at University at Albany, SUNY.}
    Yan Lok Ko\textsuperscript{\rm 2}
    Kush R. Varshney\textsuperscript{\rm 3}
    Yiming Ying\textsuperscript{\rm 2}\thanks{Corresponding Author.}
}
\begin{document}

\maketitle

\begin{abstract}

The use of machine learning models in consequential decision making often exacerbates societal inequity, in particular yielding disparate impact on members of marginalized groups defined by race and gender. The area under the ROC curve (AUC) is widely used to evaluate the performance of a scoring function in machine learning, but is studied in algorithmic fairness less than other performance metrics. Due to the pairwise nature of the AUC, defining an AUC-based group fairness metric is pairwise-dependent and may involve both \emph{intra-group} and \emph{inter-group} AUCs. Importantly, considering only one category of AUCs is not sufficient to mitigate unfairness in AUC optimization. In this paper, we propose a minimax learning and bias mitigation framework that incorporates both intra-group and inter-group AUCs while maintaining utility. Based on this Rawlsian framework, we design an efficient stochastic optimization algorithm and prove its convergence to the minimum group-level AUC. We conduct numerical experiments on both synthetic and real-world datasets to validate the effectiveness of the minimax framework and the proposed optimization algorithm.

\end{abstract}

\section{Introduction}
Recent years have witnessed an increasing recognition that allocation decisions unfavorable to people from vulnerable groups (defined by sensitive attributes such as race, gender, and age) worsen with the use of machine learning. Alongside, a burgeoning set of mitigation algorithms has been developed  \citep{agarwal2018reductions,calders2009building,calmon2017optimized,chouldechova2020snapshot,donini2018empirical,hardt2016equality,lohia2019bias,pleiss2017fairness,zafar2017fairness}. Recognizing that they are only a small sliver of all actions that may be taken when viewing the program of justice holistically, the aim of bias mitigation is to ensure that the output of a classifier is not dependent on sensitive attributes. Most existing work focuses on statistical fairness metrics composed of entries of the classifier's confusion matrix. 

On another front of machine learning, the area under the ROC curve (AUC) \citep{hanley1982meaning} is one of the most widely used performance metrics in classification tasks with class-imbalance and when the relative costs of false positives and false negatives are difficult to pin down, and in bipartite ranking tasks. Learning a scoring function by maximizing its AUC --- instead of the accuracy --- has received increasing  attention  \citep{Cortes2003AUCOV,gao2013one,ying2016stochastic, liu2019stochastic,lei2021stochastic,yang2022auc,Zhaoicml11,yang2021learning}.  
However, there is not yet much work on AUC-related fairness in machine learning. 

Properly defining group-level AUC fairness leads to two categories of metrics, depending on the specific groups to which the positive and negative examples belong. The first type, \textit{intra-group} AUC, constrains both positive and negative examples to the same group. The second type, \textit{inter-group} AUC,\footnote{\citet{kallus2019fairness} originally name this xAUC.} computes the metric with positive and negative examples being from different groups. On one hand, by only focusing on intra-group AUC, one does not fully account for all possible disparate impacts.\footnote{We use the term disparate impact in the general sense of inequality in outcomes across groups, not in the specific sense of the ratio of selection rates across groups.} Indeed, as observed by  \citet{kallus2019fairness}, similar intra-group AUCs may still lead to  a disparate impact where  positive examples of one group are misranked below negative examples of another group. On the other hand,  we also witness from the synthetic experiment in Figure \ref{fig:limitation-inter} that solely relying on inter-group AUC fairness can overlook unfairness with respect to the intra-group AUC. 
\begin{figure*}[!ht]
\begin{subfigure}{0.49\linewidth}
    \includegraphics[width=\linewidth]{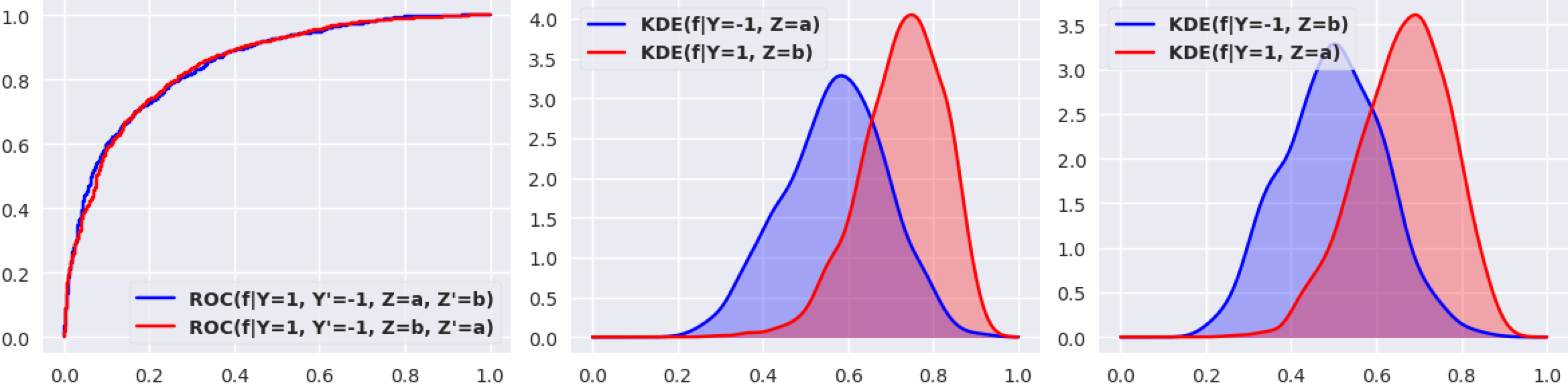}
    \caption{\it Inter-group ROC curves and Positive/negative KDE comparison}
\end{subfigure}%
    \hfill%
\begin{subfigure}{0.49\linewidth}
    \includegraphics[width=\linewidth]{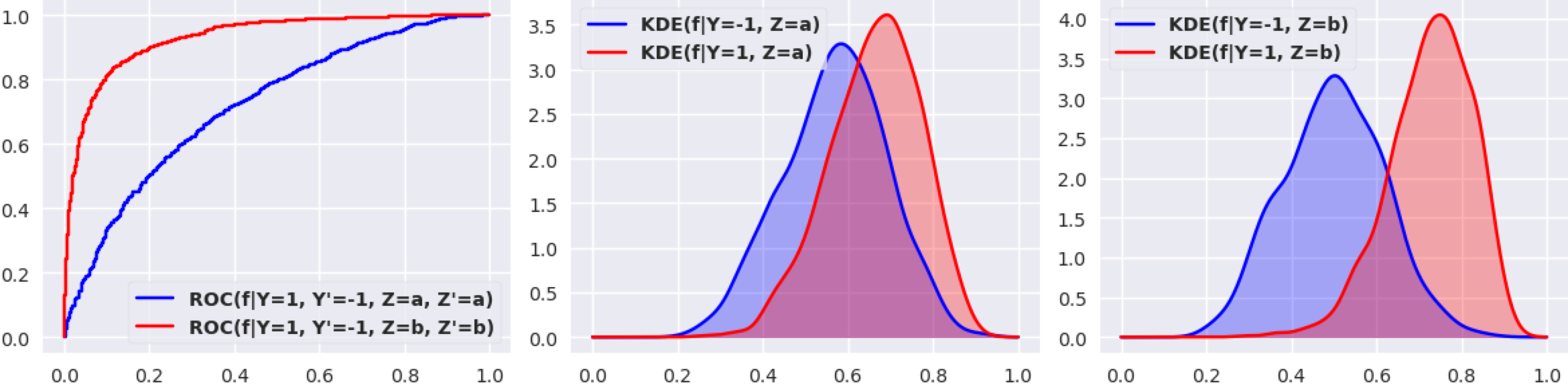}
    \caption{\it Intra-group ROC curves and Positive/negative KDE comparison}
\end{subfigure}
\caption{\it Illustration of the discrepancy between inter-group AUC and intra-group AUC. In this example, inter-group AUC is in a fair situation and intra-group AUC is in an unfair situation. Here $f$ denotes the synthetic scoring function sampled from Gaussian distribution and KDE denote the kernel density estimation of $f$, $Y$ denotes the class label and $Z$ denotes the protected attribute (see Appendix \ref{sec:exp} for more details). In part (a), the gap of the peaks in KDE is obvious. In part (b), the gap of the peaks is small and the overlap is large when $Z=Z'=a$, which indicates probability of positive sample being misranked than negative sample is higher than $Z=Z'=b$. }
\label{fig:limitation-inter}
\vspace{-10pt}
\end{figure*}
These two observations strongly suggest that to mitigate disparate impact when the performance metric is the  AUC score,  one should consider \emph{both} inter-  and intra-group AUC fairness during the learning process.

In this paper, we follow the Rawlsian principle of maximin welfare for distributive justice \citep{rawls2001justice} and  formalize our fairness goal as follows: 
\begin{center}
\textit{Find a scoring function that maximizes the minimum of inter-group AUC and intra-group AUC.}
\end{center}
Unlike usual discrimination-aware approaches that put constraints on the norm of fairness metrics, the maximin principle does not introduce unnecessary harm \citep{ustun2019fairness, martinez2020minimax}. Hence it is more natural for our initial purpose of learning via AUC maximization.

Our main contributions are summarized as follows. 

\begin{enumerate}

\item We justify the necessity of simultaneously achieving fairness in terms of intra- and inter-group AUCs. We then propose a minimax learning framework under the Rawlsian principle, collecting the objectives of both into one. 

\item We propose a stochastic algorithm that updates the model parameter by gradient descent  and the group weight by mirror ascent. We then prove the maximum re-weighted group error is guaranteed to be minimized by our algorithm in the nonconvex-concave setting.

\item We conduct numerical experiments on real-world datasets. We demonstrate the effectiveness of the minimax AUC fairness framework by observing    substantial utility improvement over the notion of equality of error rates, and also fairness improvement over frameworks dealing with intra-group or inter-group AUC alone.

\end{enumerate}

\subsection{Related Work}

Algorithmic fairness has received much attention recently, especially in the context of classification. There are three main strategies for bias mitigation: pre-processing the training data \citep{dwork2012fairness,feldman2015certifying,calmon2017optimized}, enforcing fairness directly during the training step (known as in-processing) \citep{kamishima2011fairness,agarwal2018reductions, donini2018empirical}, and post-processing the classifications \citep{hardt2016equality,pleiss2017fairness,lohia2019bias}. Our work fits in the middle category of in-processing.

\subsubsection{AUC-Related Fairness Metrics.} Several works have attempted to address unfairness concerns in AUC-related problems (See Appendix \ref{sec:literature} for a complete discussion). \citet{dixon2018measuring} propose the Pinned AUC fairness metric, which works by resampling the data such that each of the two groups make up 50\% of the data, and then calculating the AUC on the resampled dataset. \citet{beutel2019fairness} propose ranking pairwise fairness definitions and a methodology that regularizes the training objective through a term that measures the correlation between the residual of selected and unselected people. \citet{kallus2019fairness} observe inter-group AUC (xAUC) unfairness in the COMPAS dataset and propose a post-processing approach to achieve xAUC similarity. \citet{narasimhan2020pairwise} propose a cross-group fairness metric for general pairwise ranking problems and propose to maximize AUC under cross-group fairness constraints. 
\citet{vogel2021learning} define a fairness metric in terms of the ROC itself and propose a regularization method to achieve fairness. It is worth mentioning that all of the above works only focus on closing the gap of either intra-group metric or inter-group metric, but ignore the interplay between them. Furthermore, forcing unrealistic small group differences could harm the overall AUC maximization target.

\subsubsection{Minimax Principle in Algorithmic Fairness.} The Rawlsian principle has inspired several recent works to develop minimax frameworks to mitigate the disparate impact of machine learning models during training. In particular, \citet{mohri2019agnostic} apply the agnostic federated learning framework in the minimax group fairness context. \citet{martinez2020minimax} view the minimax fairness framework as a multi-objective function and pursue the Pareto front solution. \citet{lahoti2020fairness} study fairness without demographics and update group weights via adversarial learning. \citet{shekhar2021adaptive} propose an adaptive sampling algorithm based on the principle of optimism to update group weights. \citet{diana2021minimax} utilize minimax group errors as a fairness constraint upper bound for further optimization. 

The above minimax fairness frameworks all focus on classification and none has considered the AUC metric. Thus, the work herein is unique in applying the Rawlsian principle to AUC problems and considering both inter- and intra-group AUC simultaneously.  

\section{The Minimax Pairwise-Group Fairness Framework For Optimizing AUC \label{sec:problem}}

Let $X$ and $Y$ be two random variables. Here $Y \in \Ycal = \{\pm 1\}$ denotes the binary output label and  $X \in \Xcal$ denotes the input features where $\Xcal$ is a closed and bounded  domain in $\Rbb^d$. Define a  scoring function $f_\theta: \Xcal \rightarrow \Rbb$, where $\theta$ is the model parameter taking values in $\Theta$. 
The AUC of $f_\theta$ measures the probability that it correctly ranks a positive example above a negative example, i.e.,
\begin{equation}\label{eq:auc}
\auc(f_\theta) = \Ebb[\Ibb[f_\theta(X) > f_\theta(X')]| Y=1, Y'=-1],
\end{equation}
where $\Ibb$ is the indicator function taking value $1$ when the event is true, and $0$ otherwise. Observe that the definition \eqref{eq:auc} depends on  a pair of examples  $(X, Y)$ and $(X', Y')$.


In the context of fairness, we consider a third random variable $Z\in \Zcal$ to denote the sensitive group attribute, such as gender. For notational simplicity, we restrict our current interest to the case of two groups, i.e.,  $\Zcal = \{a, b\}$. However, our results can easily be extended to the general setting of  multiple groups. The distribution $\mathcal{D}$ of the triplet $(X, Y, Z)$ can be expressed as a mixture of the distributions of $X, Y|Z=z$. We define the group-level AUC as 
\begin{multline*}\label{eq:auc-group-level}
\auc_{z, z'}(f_\theta) = \\\Ebb[\Ibb[f_\theta(X) \!>\! f_\theta(X')]|
Y\!=\!1, Y'\!=\!-1, Z\!=\!z, Z'\!=\!z']. \numberthis   
\end{multline*}
It is worth mentioning the group-level AUC naturally enjoys a pairwise dependence w.r.t. groups $Z, Z'$. Following the terminology in \citet{beutel2019fairness}, we name such group-level AUC as \textit{intra-group} AUC  when $z=z'$, and \textit{inter-group} AUC when $z\neq z'$. Previous work has aimed for fairness in these definitions separately \citep{beutel2019fairness, kallus2019fairness, narasimhan2020pairwise}, i.e.\ either
\begin{align*}
\auc_{a, a}(f_\theta) & = \auc_{b, b}(f_\theta), \textrm{ or}\\
\auc_{a, b}(f_\theta) & = \auc_{b, a}(f_\theta).
\end{align*}
However, if one only requires fair treatment in terms of intra-group (resp. inter-group) AUCs, the model may still suffer from unfair treatment in terms of inter-group (resp. intra-group) AUCs as argued by \citet{kallus2019fairness} and Figure \ref{fig:limitation-inter}. To address this issue, one naive solution is to require fairness in both of them so that $\auc_{a, a}(f_\theta) \approx \auc_{b, b}(f_\theta) = \kappa_1$ and $\auc_{a, b}(f_\theta) \approx \auc_{b, a}(f_\theta) = \kappa_2$.  However, the potential discrepancy of $\kappa_1 \neq \kappa_2$ could also lead to unfairness. We elaborate in the following example.

Consider finding qualified candidates with gender as the sensitive attribute. In this scenario, AUC measures the probability of a qualified candidate being ranked higher than an unqualified candidate. Assuming the fair intra-group AUC is approximately 80\%, which indicates the chance of a qualified female ranking higher than an unqualified female is more or less the same 80\% as the male. Further assuming the fair inter-group AUC is approximately 60\%, which means the chance of a qualified female ranking higher than an unqualified male is more or less the same 60\% as the reverse case. Nevertheless, this leads to the unfair situation that a qualified female ranking higher than an unqualified male (60\%) is lower than herself ranking higher than another unqualified female (80\%)! Moreover, if male candidates are the majority of the applicant pool, her qualification is likely to be overwhelmed. 

The above example demonstrates that disparity between intra-group AUCs and inter-group AUCs should not be allowed. Ideally speaking, group-level AUC fairness should be cast as 
\begin{equation}\label{eq:equal-all}
\auc_{a, a}(f_\theta) \!=\! \auc_{a, b}(f_\theta) \!=\! \auc_{b, a}(f_\theta) \!=\! \auc_{b, b}(f_\theta).
\end{equation}
In the fair hiring example, the above identity can be interpreted as 
\begin{center}
\textit{The chance of a qualified candidate from any gender ranking higher than an unqualified candidate from any gender is the same.}
\end{center}
While the above discussion naturally leads to a regularization or a constrained optimization fairness scheme, we adapt the minimax fairness scheme \citep{martinez2020minimax, diana2021minimax}. To explain our choice, we first decompose the overall AUC as a mixture of the intra-group and the inter-group AUCs. 
\begin{multline*}\label{eq:auc-mixture-decomposition}
\auc(f_\theta) = \sum_{z \in \Zcal}\Pbb[Z=z, Z'=z] \auc_{z, z}(f_\theta)\\
+ \sum_{z \neq z'}\Pbb[Z=z, Z'=z'] \auc_{z, z'}(f_\theta), \numberthis
\end{multline*}
where $\Pbb[Z, Z']$ is the prior distribution of a pair of sensitive attributes. Recall that the target is to optimize AUC, or to maximize the probability of a qualified candidate ranking higher than an unqualifed one in the example. Eq. \eqref{eq:auc-mixture-decomposition} indicates that maximizing intra-group or inter-group AUC does not conflict with the interest of this target. Therefore, maximizing the minimum group-level AUC will have the potential of not significantly sacrificing the overall AUC while boosting the similarity on Eq. \eqref{eq:equal-all}. Based on the above discussion, we are in position to formalize the learning problem in a maximin sense.

\begin{equation}\label{eq:max-min-auc}
\max_{\theta \in \Theta}\min_{z,z' \in \Zcal} \auc_{z, z'}(f_\theta).  
\end{equation}
Eq. \eqref{eq:max-min-auc} is intractable for two reasons. First, the true distribution $\Dcal$ is unknown. In practice, we only have access to a sample $S = \{S_1, \cdots, S_n\}$ drawn from $\Dcal$. For convenience, we denote $S_i \in S$ by $\xbf_i^{z+}$ (resp. $\xbf_i^{z-}$) if it is coming from group $z$ with positive (resp. negative) label, we further denote $S^{z+} = \{\xbf_i^{z+}| i \in [n]\}$ and $S^{z-} = \{\xbf_i^{z-}| i \in [n]\}$ as two subsets of such examples. We can therefore define the empirical group-level AUC as:
\begin{multline*}
\widehat{\auc}_{z, z'}(f) = \widehat{\auc}(f; S^{z+}, S^{z'-})\\
= \frac{1}{n^{z+}n^{z'-}} \sum_{i=1}^{n^{z+}}\sum_{j=1}^{n^{z'-}} \Ibb[f_\theta(\xbf_i^{z+}) > f_\theta(\xbf_j^{z'-})],
\end{multline*}
where $n^{z+}$ and $n^{z'-}$ denote the number of samples from $S^{z+}, S^{z'-}$, respectively.  Furthermore, we replace the non-differentiable indicator function $\Ibb$ with some (sub)-differentiable and non-increasing surrogate loss function $\ell$. Let $\hat{R}^\ell_{z, z'} = \hat{R}^\ell(\,\cdot\,; S^{z+}, S^{z'-})$ denote the $\ell$-surrogated empirical risk for AUC with group $z, z'$. And let $\hat{R}^\ell(\cdot) = \hat{R}^\ell(\,\cdot\,; S) = (\hat{R}^\ell(\,\cdot\,; S^{z+}, S^{z'-}))_{z, z' \in \{a,b\}}$. The problem therefore becomes minimizing the largest group-level risk, which can be formulated as
\begin{equation}\label{eq:min-max-risk}
\min_{\theta \in \Theta} \max_{z, z' \in \Zcal} \hat{R}^\ell_{z, z'}(\theta).    
\end{equation}
The above problem is again non-differntiable. We introduce an additional variable $\lambda \in \Rbb^4$ and relax the problem as a zero-sum game between two players $\theta$ and $\lambda$. At each round, player $\theta$ intends to find a better position to minimizes the weighted risk, while player $\lambda$ assigns weights to each group that maximizes the weighted risk. Formally speaking, we arrive at the following minimax problem
\begin{equation}\label{eq:empirical-minimax-general}
\min_{\theta \in \Theta} \max_{\lambda \in \Lambda} F(\theta, \lambda) = \lambda^\top\hat{R}^\ell(\theta) = \sum_{z, z' \in \Zcal} \lambda_{z, z'}\hat{R}^\ell_{z, z'}(\theta),
\end{equation}
where $\Lambda = \{\lambda \in \Rbb^4| \sum_{z, z' \in \Zcal} \lambda_{z, z'} = 1, \lambda_{z, z'} \geq 0\}$ is a $2\times 2$-dimensional simplex. 

\section{Efficient Optimization Algorithm with Convergence Guarantee}

In this section, we introduce a stochastic gradient method summarized in Algorithm \ref{alg:rmw-fair-auc} to solve the minimax optimization problem \eqref{eq:empirical-minimax-general}.

\begin{algorithm}[!ht]
\caption{\it MinimaxFairAUC\label{alg:rmw-fair-auc}}
\begin{algorithmic}[1]
\STATE {\bf Inputs:} Training set $S$ with label $Y$ and protected attribute $Z$, model $f_\theta$, number of iterations $T$, batch size $m$, learning rates $\{\eta_\theta, \eta_\lambda\}$
\STATE {Initialize $\theta_0 \in 
\Theta$ and $\lambda_0 \in \Lambda$ with $\lambda_{z, z'} = \frac{n^{z+} n^{z'-}}{n^+ n^-}$ for all $z, z' \in \Zcal$}
\FOR{$t=1$ to $T-1$}
\STATE{$B_t = \texttt{StratifiedSampler}_m(S; Y, Z)$}
\STATE{$\theta_t = \theta_{t-1} -\eta_\theta \lambda_{t-1}^\top \nabla \hat{R}^\ell(\theta_{t-1}; B_t)$}
\STATE{$\gamma_t = \lambda_{t-1} \exp(\eta_\lambda \hat{R}^\ell(\theta_{t-1}; B_t))$}
\STATE{$\lambda_t = \gamma_t / \|\gamma_t\|_1$}
\ENDFOR
\STATE {\bf Outputs:} $\theta_\tau \sim \texttt{Unif}(\{\theta_t\}_{t=1}^T)$
\end{algorithmic}
\end{algorithm}
Let us illustrate Algorithm \ref{alg:rmw-fair-auc} in detail. At Line 4, the \texttt{StratifiedSampler} operator randomly sub-samples mini-batch by dividing the training set $S$ into strata based on the label and the group attribute. We note that this sub-sampling scheme will later guarantee the stochastic gradients being unbiased estimators of the full sample gradients, which is essential for the stochastic optimization analysis. We summarize this result in the following proposition.

\begin{proposition}\label{prop:stratified-sampling}
For any fixed $\theta \in \Theta$, let $B \subset S$ be given by \texttt{StratifiedSampler}. The following statement holds
\begin{align*}
\Ebb_B[\hat{R}^\ell(\theta; B)] = \hat{R}^\ell(\theta; S), \Ebb_B[\nabla\hat{R}^\ell(\theta; B)] = \nabla\hat{R}^\ell(\theta; S).
\end{align*}
\end{proposition}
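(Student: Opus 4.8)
The plan is to exploit the fact that each component of the vector $\hat{R}^\ell(\theta;\cdot)$ is a two-sample pairwise average, and that \texttt{StratifiedSampler} draws from each of the four strata $S^{a+}, S^{a-}, S^{b+}, S^{b-}$ separately. First I would fix a single group pair $(z,z')$ and write the corresponding batch estimate as
$$\hat{R}^\ell(\theta; B^{z+}, B^{z'-}) = \frac{1}{|B^{z+}|\,|B^{z'-}|}\sum_{i \in B^{z+}}\sum_{j \in B^{z'-}} \ell\bigl(f_\theta(\xbf_i^{z+}) - f_\theta(\xbf_j^{z'-})\bigr),$$
where $B^{z+}\subseteq S^{z+}$ and $B^{z'-}\subseteq S^{z'-}$ collect the positive and negative batch examples from the relevant strata. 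Because these two subsets come from distinct strata, they are sampled independently of one another.

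The key step is to compute the expectation of a single summand. For any fixed position, the sampled positive $\xbf_i^{z+}$ is marginally uniform over $S^{z+}$ and, independently, the sampled negative $\xbf_j^{z'-}$ is marginally uniform over $S^{z'-}$. Hence
$$\Ebb_B\bigl[\ell\bigl(f_\theta(\xbf_i^{z+}) - f_\theta(\xbf_j^{z'-})\bigr)\bigr] = \frac{1}{n^{z+}n^{z'-}}\sum_{k=1}^{n^{z+}}\sum_{l=1}^{n^{z'-}}\ell\bigl(f_\theta(\xbf_k^{z+}) - f_\theta(\xbf_l^{z'-})\bigr) = \hat{R}^\ell_{z,z'}(\theta).$$
Since every summand in the batch estimate carries this same expectation, linearity of expectation yields $\Ebb_B[\hat{R}^\ell(\theta; B^{z+}, B^{z'-})] = \hat{R}^\ell_{z,z'}(\theta)$; stacking the four components gives the first identity $\Ebb_B[\hat{R}^\ell(\theta; B)] = \hat{R}^\ell(\theta; S)$. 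For the gradient identity I would note that $B$ ranges over a finite collection of possible batches, so $\Ebb_B$ is a finite weighted sum with which the linear operator $\nabla$ commutes:
$$\Ebb_B[\nabla\hat{R}^\ell(\theta; B)] = \nabla\,\Ebb_B[\hat{R}^\ell(\theta; B)] = \nabla\hat{R}^\ell(\theta; S),$$
where the last equality invokes the first identity.

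The main obstacle to handle carefully is the pairwise (U-statistic) structure of the risk: because each component is a double average over positive-negative pairs, one must verify that the \emph{product} of the two marginal uniform distributions reproduces the full pairwise average. This is precisely what stratifying by label guarantees, since positives and negatives always land in different strata and are therefore drawn independently. A secondary subtlety is that if within-stratum sampling is performed without replacement, batch elements inside a single stratum are not mutually independent; the argument nonetheless goes through because each summand pairs one positive with one negative from \emph{different} strata, so only cross-stratum independence together with the marginal uniformity of each individual draw is required, and both hold regardless of the replacement scheme.
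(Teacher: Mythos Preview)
Your argument is correct, and it takes a somewhat different route from the paper's own proof. The paper fixes a group pair $(z,z')$ and computes the expectation by explicitly enumerating all size-$m^{z+}$ subsets of $S^{z+}$ and size-$m^{z'-}$ subsets of $S^{z'-}$, then for each fixed pair $(\xbf_i^{z+},\xbf_j^{z'-})$ counts the number of subset pairs containing it via the identity $\binom{n-1}{m-1}/\bigl(m\binom{n}{m}\bigr)=1/n$; this is the classical U-statistic counting argument. You instead bypass the combinatorics by observing that each individual batch element is \emph{marginally} uniform over its stratum, that positives and negatives always lie in distinct strata and are therefore independent, and then invoke linearity of expectation. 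Your approach is shorter and makes transparent exactly which properties of the sampler are needed (cross-stratum independence and per-element marginal uniformity, not joint uniformity within a stratum), while the paper's combinatorial calculation is more explicit and ties directly to the without-replacement scheme actually used in Algorithm~\ref{alg:group-label-sampler}. For the gradient identity the paper reruns the same subset-counting argument with $\ell$ replaced by $\nabla_\theta\ell$, whereas you obtain it in one line by commuting the finite expectation with $\nabla$; both are valid.
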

The detailed description of \texttt{StratifiedSampler} and the proof of Proposition \ref{prop:stratified-sampling} are deferred to Appendix \ref{sec:stratified-sampling}. While a uniform sampler over the full dataset $S$ also can be shown to construct unbiased estimators, it can miss certain groups or labels during training, especially when the dataset is imbalanced in groups or labels \citep{shekhar2021adaptive}.

At Line 5, the player $\theta$ performs stochastic gradient descent with learning rate $\eta_\theta$. Here we slightly abuse the notation and use $\lambda_t$ to denote the $t$-th iteration of $\lambda \in \Rbb^4$. At Line 6-7, the player $\lambda$ performs the well-known exponential weight updates. We note that the exponential weight update is equivalent to mirror ascent when the mirror map $\Phi: \Lambda \rightarrow \Rbb$ is given by the negative entropy \citep{bubeck2015convex}, i.e. $\Phi(\lambda) \!=\! \sum_{z, z' \in \Zcal}\lambda_{z, z'} \log(\lambda_{z, z'})$. The exact update is given by
\begin{align*}
\nabla \Phi(\gamma_t) = & \nabla \Phi(\lambda_{t-1}) + \eta_t \hat{R}^\ell(\theta_{t-1}; B_t),\\
\lambda_t = & \arg\min_{\lambda \in \Lambda} D_\Phi(\lambda \parallel \gamma_t),
\end{align*}
where $D_\Phi(\lambda \parallel \lambda') = \Phi(\lambda) - \Phi(\lambda') - \nabla \Phi(\lambda')^\top (\lambda - \lambda')$ is the Bregman divergence associated with $\Phi$. Therefore, Algorithm \ref{alg:rmw-fair-auc} can be viewed as a stochastic gradient descent mirror ascent method. Next we introduce some standard assumptions for our convergence analysis.



\begin{assumption}\label{assmp:lipschitz-continuity}
For any $\theta \in \Theta$ and $\lambda \in \Lambda$, the gradients of $F$ are bounded by $G_\theta$ and $G_\lambda$ respectively, i.e. $\|\lambda^\top \nabla \hat{R}^\ell(\theta; S)\|_2 \leq G_\theta$, and $\|\hat{R}^\ell(\theta; S)\|_\infty \leq G_\lambda$.
\end{assumption}

\begin{assumption}\label{assmp:smoothness}
The objective $F$ is $L_\theta$ and $L_\lambda$ smooth respectively, i.e. $\|\lambda^\top\nabla \hat{R}^\ell(\theta; S) - \lambda^\top \nabla \hat{R}^\ell(\theta'; S)\|_2 \leq L_\theta \|\theta - \theta'\|_2$ and $\|\hat{R}^\ell(\theta; S) - \hat{R}^\ell(\theta'; S)\|_\infty \leq L_\lambda \|\lambda - \lambda'\|_1$ for any $\theta, \theta' \in \Theta$ and $\lambda, \lambda' \in \Lambda$. 
\end{assumption}

\begin{assumption}\label{assmp:bounded-variance}
For any fixed $\theta \in \Theta, \lambda \in \Lambda$ and randomly sampled pair $\xi = \{(x, y, z), (x',y', z')\} \!\subset\! S$, the variances of the stochastic gradients of the function $F(\cdot, \cdot; \xi)$ are bounded by $\sigma_\theta^2$ and $\sigma_\lambda^2$ respectively, i.e. $\Ebb_\xi[\|\lambda^\top\nabla \hat{R}^\ell(\theta; \xi) \!-\! \lambda^\top\nabla \hat{R}^\ell(\theta; S)\|_2^2] \!\leq\! \sigma_\theta^2$ and $\Ebb_\xi[\|\hat{R}^\ell(\theta; \xi) \!-\! \hat{R}^\ell(\theta; S)\|_\infty^2] \!\leq\! \sigma_\lambda^2$. 
\end{assumption}
Denote $G = \max\{G_\theta, G_\lambda\}$, $L = \max\{L_\theta, L_\lambda\}$ and $\sigma = \max\{\sigma_\theta, \sigma_\lambda\}$. Now let $P(\theta) = \max_{\lambda \in \Lambda} F(\theta, \lambda)$. Its Moreau envelope $P_{1/2L}$ is defined as $P_{1/2L}(\omega) = \min_{\theta} P(\theta) + L\|\theta - \omega\|_2^2$. With the above setup, we are in position to present the convergence of Algorithm \ref{alg:rmw-fair-auc}. 

\begin{theorem}[Informal]\label{thm:convergence}
Suppose Assumption \ref{assmp:lipschitz-continuity}, \ref{assmp:smoothness} and \ref{assmp:bounded-variance} hold true. Then the output $\theta_\tau$  of Algorithm \ref{alg:rmw-fair-auc} satisfies
\begin{equation}\label{eq:moreau-envelope-convergence}
\Ebb[\|\nabla P_{1/2L}(\theta_\tau)\|_2 \leq \epsilon(T, \eta_\theta, \eta_\lambda),
\end{equation}
where $\epsilon(T, \eta_\theta, \eta_\lambda)$ is an absolute constant. In particular, to achieve some small $\epsilon = \epsilon(T, \eta_\theta, \eta_\lambda)$, one choose $\eta_\theta = \Theta(\epsilon^4)$, $\eta_\lambda = \Theta(\epsilon^2)$ and $T = \Ocal(\epsilon^{-8})$.
Furthermore, there exists $\hat{\theta} \in \Theta$ such that $\Ebb[\|\hat{\theta} - \theta_\tau\|_2] \leq \epsilon / 2L$ and it satisfies
\begin{equation}\label{eq:primal-convergence}
\Ebb[\min_{\xi \in \partial P(\hat{\theta})} \|\xi\|_2 ]\leq \epsilon.
\end{equation}
\end{theorem}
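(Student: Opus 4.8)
The plan is to treat the Moreau envelope $P_{1/2L}$ as a Lyapunov potential and analyze the single-loop, two-timescale descent/mirror-ascent scheme through its decrease, following the weakly-convex route for nonconvex-concave minimax problems. The first thing I would record is that $P(\theta)=\max_{\lambda\in\Lambda}F(\theta,\lambda)$ is $L$-weakly convex: it is a pointwise maximum over the compact simplex $\Lambda$ of the $L_\theta$-smooth maps $F(\cdot,\lambda)$, so $P(\cdot)+L\|\cdot\|_2^2$ is convex. Hence $P_{1/2L}$ is well-defined and differentiable, the proximal point $\hat{\theta}_t:=\arg\min_\theta P(\theta)+L\|\theta-\theta_t\|_2^2$ is unique, and $\nabla P_{1/2L}(\theta_t)=2L(\theta_t-\hat{\theta}_t)$. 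This identity is exactly what converts the envelope bound \eqref{eq:moreau-envelope-convergence} into the primal subdifferential guarantee \eqref{eq:primal-convergence} at the very end, through the proximal optimality condition $2L(\theta_t-\hat{\theta}_t)\in\partial P(\hat{\theta}_t)$ (Davis--Drusvyatskiy).

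Next I would derive a one-step descent inequality for the potential. Since $\hat{\theta}_{t-1}$ is feasible for the minimization defining $P_{1/2L}(\theta_t)$, substituting the update $\theta_t=\theta_{t-1}-\eta_\theta\lambda_{t-1}^\top\nabla\hat{R}^\ell(\theta_{t-1};B_t)$, taking conditional expectation, and invoking the unbiasedness of the stratified stochastic gradient (Proposition \ref{prop:stratified-sampling}) yields a bound of the form
\[ \Ebb[P_{1/2L}(\theta_t)] \le P_{1/2L}(\theta_{t-1}) - \tfrac{\eta_\theta}{4}\|\nabla P_{1/2L}(\theta_{t-1})\|_2^2 + 2L\eta_\theta\,\delta_{t-1} + L\eta_\theta^2(G^2+\sigma^2), \]
where $\delta_{t-1}:=P(\theta_{t-1})-F(\theta_{t-1},\lambda_{t-1})\ge 0$ is the dual suboptimality gap. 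The cross term is handled by combining the $L_\theta$-smoothness of $F(\cdot,\lambda_{t-1})$ (Assumption \ref{assmp:smoothness}) with the proximal inequality $P(\theta_{t-1})-P(\hat{\theta}_{t-1})\ge L\|\theta_{t-1}-\hat{\theta}_{t-1}\|_2^2$, and the $\eta_\theta^2$ term absorbs the gradient-variance contribution from Assumption \ref{assmp:bounded-variance}. Summing and telescoping gives
\[ \frac{1}{T}\sum_{t=1}^{T}\Ebb\|\nabla P_{1/2L}(\theta_{t-1})\|_2^2 \lesssim \frac{1}{\eta_\theta T} + \frac{1}{T}\sum_{t=1}^{T}\Ebb[\delta_{t-1}] + \eta_\theta(G^2+\sigma^2), \]
so the whole argument reduces to controlling the averaged dual gap.

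The main obstacle, and the reason the two time scales $\eta_\theta\ll\eta_\lambda$ are needed, is bounding $\tfrac{1}{T}\sum_t\Ebb[\delta_{t-1}]$, because the mirror-ascent iterate $\lambda_{t-1}$ chases the maximizer of a target $F(\theta_{t-1},\cdot)$ that drifts as $\theta$ moves. I would set up a one-step recursion for $\delta_t$ directly, which sidesteps any need for Lipschitzness of the (possibly set-valued) argmax. Writing $\delta_t-\delta_{t-1}$ as the sum of a primal-drift part $[P(\theta_t)-P(\theta_{t-1})]-[F(\theta_t,\lambda_t)-F(\theta_{t-1},\lambda_t)]$ and a mirror-ascent part $-[F(\theta_{t-1},\lambda_t)-F(\theta_{t-1},\lambda_{t-1})]$, the first part is $O(\eta_\theta G^2)$ since $P$ and each $F(\cdot,\lambda)$ are $G_\theta$-Lipschitz and $\|\theta_t-\theta_{t-1}\|_2\le\eta_\theta G_\theta$, while the second part is the exponential-weights ascent progress, analyzed in the negative-entropy Bregman geometry. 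Telescoping the Bregman terms over the simplex (entropic diameter $\log 4$) and using the $L_\lambda$-smoothness of $\hat{R}^\ell(\cdot)$ (Assumption \ref{assmp:smoothness}) to bound the per-step drift gives
\[ \frac{1}{T}\sum_{t=1}^{T}\Ebb[\delta_{t-1}] \lesssim \frac{\log 4}{\eta_\lambda T} + \eta_\lambda(G^2+\sigma^2) + \frac{\eta_\theta}{\eta_\lambda}\,G^2, \]
where the crucial coupling term $\tfrac{\eta_\theta}{\eta_\lambda}$ is small precisely because of the time-scale separation; the delicate point here is reconciling the noise in the stochastic dual gradient with the drifting inner maximization, which is what inflates the final iteration count relative to the deterministic case.

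Finally I would substitute the dual bound into the primal recursion and balance all error terms, the $\tfrac{1}{\eta_\theta T}$ and $\eta_\theta(G^2+\sigma^2)$ primal terms against the $\tfrac{1}{\eta_\lambda T}$, $\eta_\lambda(G^2+\sigma^2)$ and $\tfrac{\eta_\theta}{\eta_\lambda}G^2$ dual terms, including the stochastic-variance contributions, against a target accuracy of order $\epsilon^2$ for the squared-average gradient. This yields the stated schedule $\eta_\lambda=\Theta(\epsilon^2)$, $\eta_\theta=\Theta(\epsilon^4)$, and $T=\Ocal(\epsilon^{-8})$, matching the known complexity of stochastic nonconvex-concave gradient descent ascent. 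Since $\theta_\tau$ is drawn uniformly from $\{\theta_t\}$, Jensen's inequality turns the squared-average bound into the in-expectation estimate \eqref{eq:moreau-envelope-convergence}. The closing statement \eqref{eq:primal-convergence} then follows by taking $\hat{\theta}=\hat{\theta}_\tau$: one has $\Ebb\|\hat{\theta}-\theta_\tau\|_2=\tfrac{1}{2L}\Ebb\|\nabla P_{1/2L}(\theta_\tau)\|_2\le\epsilon/2L$, and the proximal optimality condition places $2L(\theta_\tau-\hat{\theta}_\tau)\in\partial P(\hat{\theta}_\tau)$ with norm at most $\|\nabla P_{1/2L}(\theta_\tau)\|_2\le\epsilon$.
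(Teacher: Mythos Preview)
Your overall architecture matches the paper: weak convexity of $P$ via Danskin-type reasoning, the Moreau envelope $P_{1/2L}$ as a Lyapunov, the one-step descent inequality with the dual gap $\delta_{t-1}=P(\theta_{t-1})-F(\theta_{t-1},\lambda_{t-1})$ as the only uncontrolled term (this is exactly the paper's Lemma~\ref{lem:descent-on-primal}), and the final passage to a nearby approximately stationary $\hat{\theta}$ via the Davis--Drusvyatskiy proximal identity. All of that is fine.

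The gap is in your control of the averaged dual gap. You write a recursion for the \emph{increment} $\delta_t-\delta_{t-1}$ and then claim a bound on the \emph{average} $\tfrac{1}{T}\sum_t\delta_{t-1}$ of the form $\tfrac{\log 4}{\eta_\lambda T}+\eta_\lambda+\tfrac{\eta_\theta}{\eta_\lambda}$. Telescoping $\delta_t-\delta_{t-1}$ bounds only $\delta_T$, not the average; and the $\tfrac{\log 4}{\eta_\lambda T}$ term presupposes a Bregman telescope $D(\lambda^\star\!\parallel\!\lambda_0)-D(\lambda^\star\!\parallel\!\lambda_T)$ against a \emph{fixed} comparator, whereas the relevant comparator $\lambda^\star(\theta_{t-1})$ moves with $t$ and the terms $D(\lambda^\star(\theta_{t-1})\!\parallel\!\lambda_{t-1})-D(\lambda^\star(\theta_{t-1})\!\parallel\!\lambda_t)$ do not telescope. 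You say you wish to avoid Lipschitzness of the argmax, but absent that (and absent strong concavity, which would give a contraction $\delta_t\le(1-c\eta_\lambda)\delta_{t-1}+\cdots$) the direct-recursion route does not close. Note also that if your displayed bound were valid it would actually yield $T=\Ocal(\epsilon^{-6})$, strictly better than the stated $\Ocal(\epsilon^{-8})$, which is another symptom that a step is missing.

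The paper resolves exactly this difficulty by a block localization technique (Lemmas~\ref{lem:descent-on-dual} and~\ref{lem:error-on-dual}). One partitions $\{0,\dots,T\}$ into blocks of length $B$; within block $i$ one freezes the comparator at $\lambda^\star(\theta_{iB})$ so that the Bregman terms $D(\lambda^\star(\theta_{iB})\!\parallel\!\lambda_{t-1})-D(\lambda^\star(\theta_{iB})\!\parallel\!\lambda_t)$ telescope over $t\in[iB,(i{+}1)B)$, at the price of a primal drift $\eta_\theta G\sqrt{G^2+\sigma^2/m}\cdot B$ from the movement of $\theta$ inside the block. Summing over blocks gives
\[
\frac{1}{T+1}\sum_{t=0}^T\Delta_t \;\lesssim\; \eta_\theta G\sqrt{G^2+\sigma^2/m}\,B \;+\; \frac{D^2}{\eta_\lambda B} \;+\; \eta_\lambda\sigma^2,
\]
and one optimizes over $B$, producing a coupling term of order $\sqrt{\eta_\theta/\eta_\lambda}$ (not $\eta_\theta/\eta_\lambda$). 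This block step is the missing idea in your sketch, and it is what drives the step-size schedule $\eta_\theta=\Theta(\epsilon^4)$, $\eta_\lambda=\Theta(\epsilon^2)$ and the $\Ocal(\epsilon^{-8})$ iteration complexity asserted in the theorem.
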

The exact statement and detailed proof are deferred to Appendix \ref{sec:proof-convergence}. The proof of Theorem \ref{thm:convergence} is non-trivial and consists of several intermediate lemmas. The main idea is to derive a telescoping upper bound on the error term
\[\Delta_t = \Ebb_{B_t}[\max_{\lambda} F(\theta_t, \lambda) - F(\theta_t, \lambda_t)]
\]
by utilizing the concavity of $F(\theta_t, \cdot)$ and the Pythagorean inequality w.r.t.\ the Bergman divergence $D_\Phi$. It is worth noting that this result even holds for general nonconvex-concave problems over the simplex, which is of interest in its own right. We leave the study of the generalization error as future work. We end this section with several remarks on the implications of Theorem \ref{thm:convergence} and the comparison of Algorithm \ref{alg:rmw-fair-auc} with other minimax fairness algorithms.

\begin{remark}\label{rem:implication-convergence}
Eq. \eqref{eq:moreau-envelope-convergence} characterizes the local gradient convergence of the output $\theta_\tau$ on the Moreau envelope $P_{1/2L}$. Such a convergence bound can be transferred to the objective at concern $P$ in \eqref{eq:primal-convergence} \citep{davis2019stochastic}.  We call $P$ the primal objective as it has been taken out of the maximization on the dual variable $\lambda \in \Lambda$. In the ideal case, the group weights $\lambda$ only represent the maximum group risk. Therefore the convergence on the primal objective is exactly seeking the (local) minimum of the maximum group risk, defined as the original problem \eqref{eq:min-max-risk}. Furthermore, the primal objective also provides guidance on the stopping criterion in empirical evaluations, which is known to be vague for general minimax optimization. That is, to stop when the maximum group-level AUC risk is saturated.
\end{remark}

\begin{remark}\label{rem:comparison-minimax-algorithm}
One of the main focuses of minimax fairness algorithms is how to update the model parameter and the group weights. The most closely related algorithm by \citet{diana2021minimax} also applies the exponential weights update on the group weights. However, the model update in that algorithm requires the exact solution on the weighted group risks at each iteration. This is time-consuming, aside from being intractable in most cases. On the contrary, Algorithm \ref{alg:rmw-fair-auc} is iteration-efficient as it alternately updates the model parameter and the group weight based on the stochastic mini-batch. The stochastic gradient method in \citet{martinez2020minimax} relieves the intractability, yet still require a computationally-expensive inner loop to update the model parameter. Moreover, since the algorithm is heuristic, the authors provide no convergence analysis. \citet{mohri2019agnostic} also apply an efficient stochastic optimization algorithm by updating the group weights by stochastic gradient ascent. However, such an update is sub-optimal compared to our stochastic mirror ascent on simplex \citep{beck2003mirror}. Furthermore, their convergence relies on a convexity assumption while our results hold for the general nonconvex model class. Finally, it is worth noting the above minimax fairness frameworks all focus on classification and confusion matrix-based metrics; our algorithm is the first for group-level AUCs. 
\end{remark}



\section{Empirical Evaluations}

In this section, we evaluate the performance of Algorithm \ref{alg:rmw-fair-auc} in terms of utility and fairness.
\begin{figure*}[!ht]
\begin{subfigure}[!ht]{\linewidth}
    \centering
    \includegraphics[width=\linewidth]{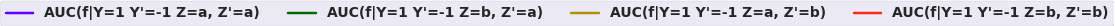}
\end{subfigure}%
    \hfill%
\begin{subfigure}[!ht]{0.24\linewidth}
    \includegraphics[width=\linewidth]{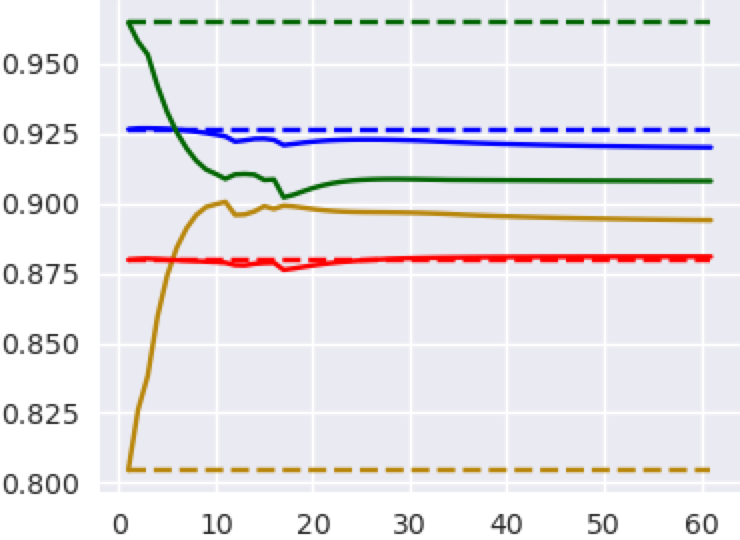}
\end{subfigure}
\hfill%
\begin{subfigure}[!ht]{0.24\linewidth}
    \includegraphics[width=\linewidth]{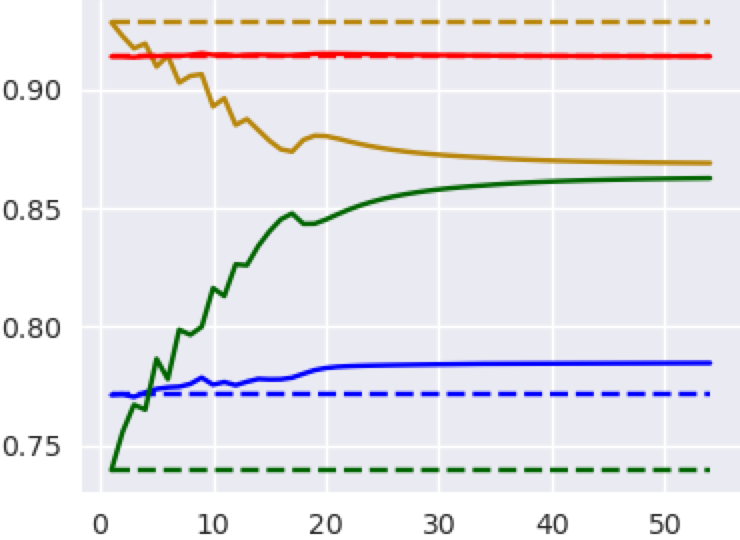}
\end{subfigure}
\hfill%
\begin{subfigure}[!ht]{0.24\linewidth}
    \includegraphics[width=\linewidth]{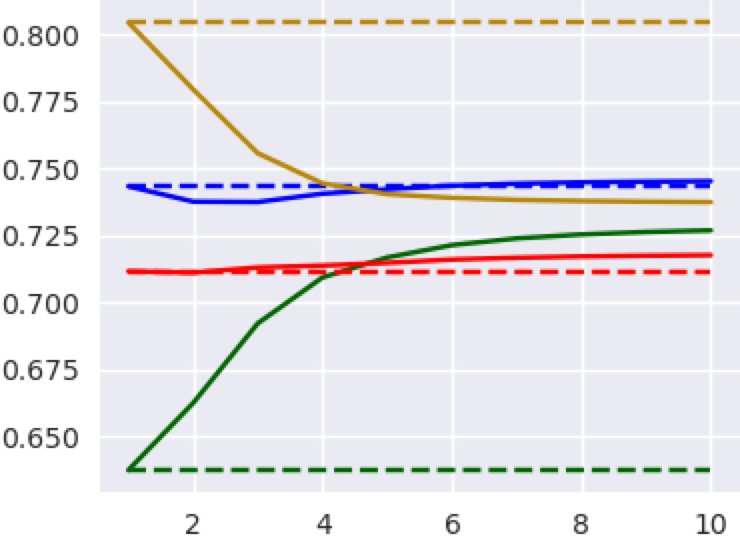}
\end{subfigure}
\hfill%
\begin{subfigure}[!ht]{0.24\linewidth}
    \includegraphics[width=\linewidth]{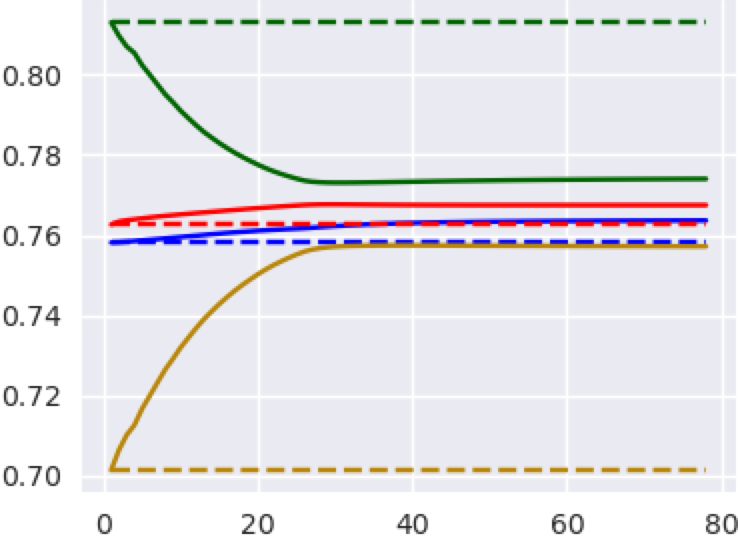}
\end{subfigure}
    \hfill%
\begin{subfigure}[!ht]{0.24\linewidth}
    \includegraphics[width=\linewidth]{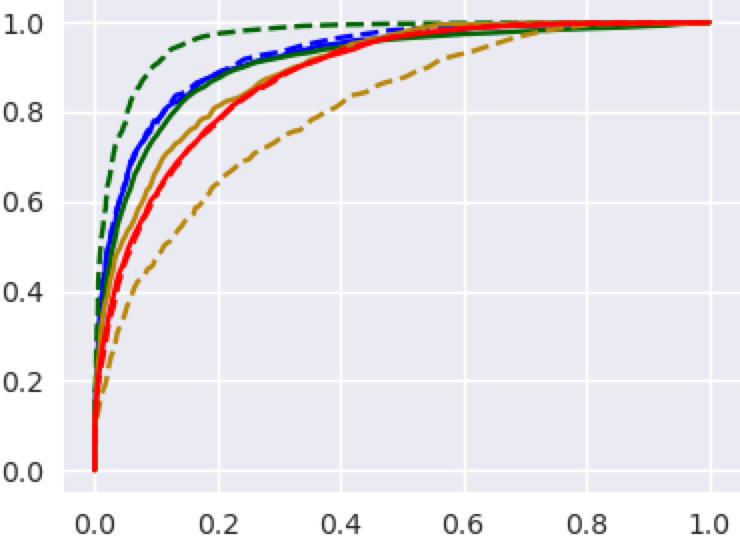}
    \caption{\it \texttt{Adult}}
\end{subfigure}
\hfill%
\begin{subfigure}[!ht]{0.24\linewidth}
    \includegraphics[width=\linewidth]{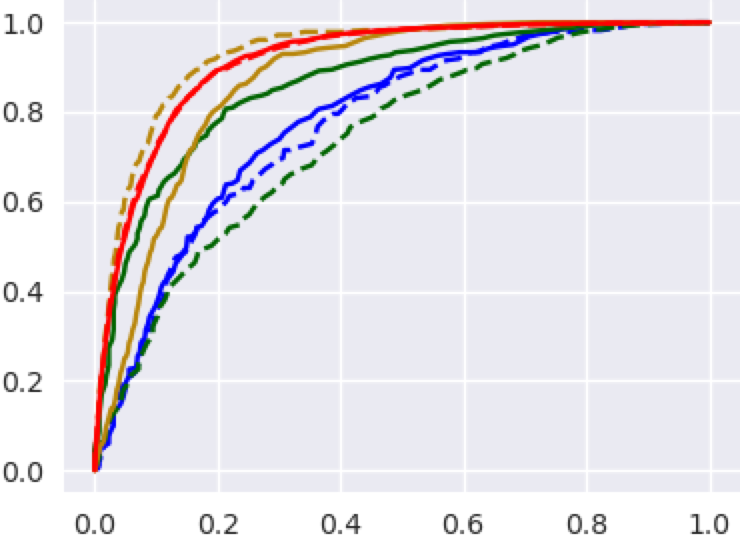}
    \caption{\it \texttt{Bank}}
\end{subfigure}
\hfill%
\begin{subfigure}[!ht]{0.24\linewidth}
    \includegraphics[width=\linewidth]{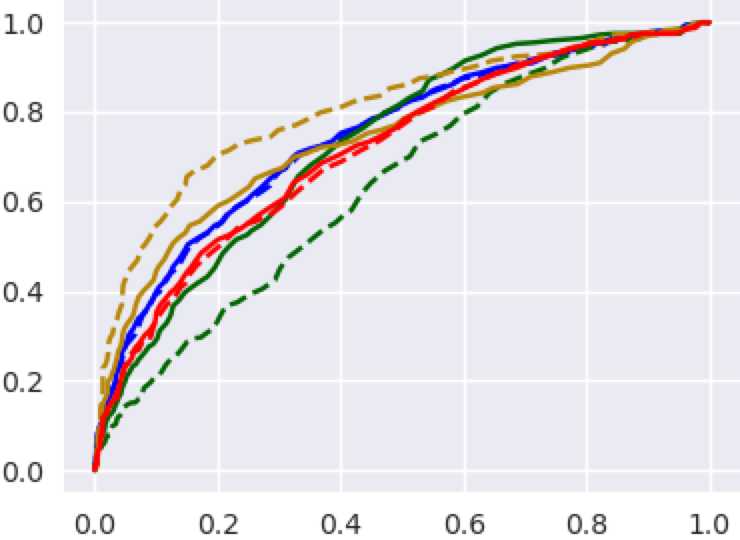}
    \caption{\it \texttt{COMPAS}}
\end{subfigure}
\hfill%
\begin{subfigure}[!ht]{0.24\linewidth}
    \includegraphics[width=\linewidth]{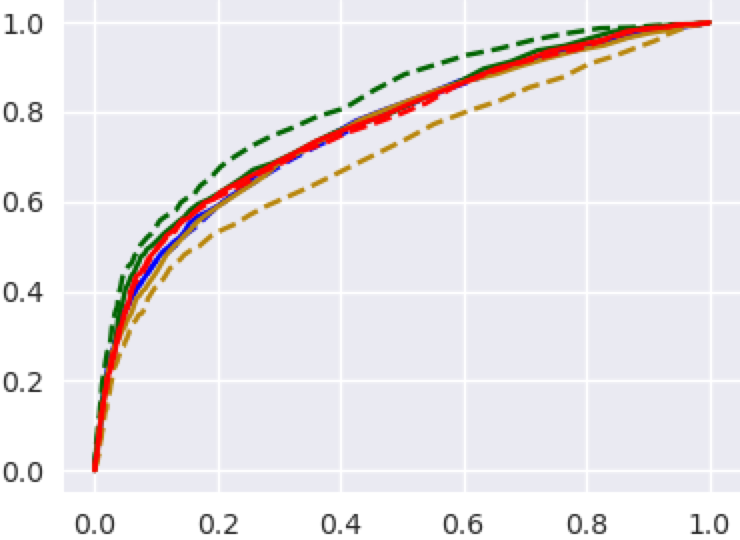}
    \caption{\it \texttt{Default}}
\end{subfigure}
\caption{\it Convergence plots on training set (upper half) and ROC plots on test set (lower half) of Algorithm \ref{alg:rmw-fair-auc} (solid curves) versus \texttt{AUCMax} (dashed curves). For convergence plots, the x-axis indicates the number of epochs, and the y-axis indicates the AUC score. For ROC plots, the x-axis indicates the FPR and the y-axis indicates the TPR. }
\label{fig:conv}
\vspace{-10pt}
\end{figure*}

\subsubsection{Datasets Information \& Feature Engineering.} We evaluate our algorithms on four datasets that have been commonly used in the fair machine learning literature \citep{zafar2017fairness, donini2018empirical}. We apply one-hot encoding to all categorical attributes and normalize all numerical attributes with zero mean and unit variance. The summary statistics of the datasets are given in Table \ref{tab:datasets}.

\begin{itemize}
    \item The \texttt{Adult} dataset is based on US census data and consists in predicting whether income exceeds $\$ 50K$ a year. The sensitive attribute is the gender of the individual, i.e. female ($Z=a$) or male ($Z=b$).
    \item The \texttt{Bank} dataset consists in predicting whether a client will subscribe to a term deposit. The sensitive attribute is the age of the individual: $Z=a$ when the age is less than 25 or over 60 and $Z=b$ otherwise.
    \item The \texttt{Compas} dataset consists in predicting recidivism of convicts in the US. The sensitive variable is the race of the individual, precisely $Z=a$ if the individual is categorized as non Caucasian and $Z=b$ if the individual is categorized as Caucasian.
    \item The \texttt{Default} dataset \citep{yeh2009comparisons} investigates customers’ default payments. The goal is to predict whether a customer will face the default situation in the next month or not. The sensitive attribute is the gender of the individual, i.e. female ($Z=a$) or male ($Z=b$).
\end{itemize}
\begin{table}[!ht]
\centering
\small
\setlength\tabcolsep{2.5pt}
\vspace{-10pt}
\begin{tabular}{|c|c|c|c|c|}
\hline
Name & \# instances & \# attributes & Group ratio & Class ratio \\\hline 
\texttt{Adult} & 48,842 & 15 & 0.48:1 & 3.03:1 \\\hline
\texttt{Bank} & 41,188 & 21 & 0.05:1 & 7.55:1 \\\hline
\texttt{Compas} & 11,757 & 53 & 1.86:1 & 1.94:1 \\\hline
\texttt{Default} & 30,000 & 24 & 1.52:1 & 3.52:1 \\\hline
\end{tabular}
\caption{\it Dataset Statistics. The group ratio is given by the protective attribute $Z=a$ versus $Z=b$. The Class ratio is given by negative versus positive class.}
\label{tab:datasets}
\vspace{-10pt}
\end{table}

\subsubsection{Choice of Models and Loss Functions.} To parameterize the family of scoring functions, we used a simple fully-connected neural network of 2 hidden layers with ReLU activation and batch normalization. The detail of the network is deffered in Appendix \ref{sec:exp}. The surrogate loss function $\ell$ is chosen as the logistic loss, i.e. $\ell: s \mapsto \log(1 + \exp(-s))$ since it is statistically consistent \citep{gao2015consistency} of the original 0/1 loss.

\subsubsection{Baselines.} We compare our framework with three in-processing baselines that have been proposed to 1) achieve fair group-level AUC scores; or 2) address unfair impact in terms of the Rawlsian principle.

\begin{itemize}
\item The \texttt{AUCMax} algorithm conducts AUC maximization on the full dataset without differentiating any groups. It updates the model parameter by mini-batch SGD. 
\item The \texttt{MinimaxFair} algorithm by \citet{diana2021minimax} aims to minimize the maximum group-level misclassification error. We replace the intractible model update by 10 epochs of SGD. 
\item  There are regularization methods \citep{beutel2019fairness} or constrained optimization methods \citep{narasimhan2020pairwise} targeting inter-group AUC fairness. We pick the one by \citet{vogel2021learning} as a representative since the authors considered the same datasets. \citeauthor{vogel2021learning} learn fair AUC scores by regularization on the difference of inter-group AUCs, which we refer to as \texttt{InterFairAUC}.
\item We also consider AUC maximization under the constraint of \eqref{eq:equal-all} as an alternative to our AUC fairness criterion under the name \texttt{EqualAUC}. See Appendix \ref{sec:eer-auc} for details.
\end{itemize}

\subsubsection{Implementation Details\footnote{\url{https://github.com/zhenhuan-yang/MinimaxFairAUC}.}} 
We partition the datasets to training, validation and testing in the ratio 60\%:20\%:20\%. The batch size $|B|$, initial stepsizes $\eta^\theta_0, \eta^\lambda_0$ and other hyperparameters are chosen based on the validation set. For Algorithm \ref{alg:rmw-fair-auc}, early stopping is implemented based on the maximum group loss over the validation set. We repeat each experiment in 25 runs across different random seeds and report the average result on the testing set. More details are given in Appendix \ref{sec:exp}.

\subsection{Results on Real Datasets}


We first investigate the convergence property of Algorithm \ref{alg:rmw-fair-auc}. We initialize the model parameters $\theta_0$ as the one trained from \texttt{AUCMax} to better illustrate the improvement over the baseline. As shown in Figure \ref{fig:conv} upper half, the inter-group AUC unfairness is prevalent in all four datasets, and intra-group AUC unfairness exists in \texttt{Adult} and \texttt{Bank}. When there is only inter-group AUC unfairness, Algorithm \ref{alg:rmw-fair-auc} mainly lifts the lowest inter-group AUC score. This can potentially benefit intra-group AUC (cf. \texttt{Compas} and \texttt{Default}.) When both intra-group and inter-group AUC unfairness exist, Algorithm \ref{alg:rmw-fair-auc} alternately lifts the lowest AUCs from inter-group then from intra-group, leading to a smaller gap in both of them separately. We next investigate the generalization/test performance of Algorithm \ref{alg:rmw-fair-auc}. In Figure \ref{fig:conv} lower half, the ROC curves of Algorithm \ref{alg:rmw-fair-auc} narrows the gaps of \texttt{AUCMax} towards the middle. 

We next develop a quantitative understanding of Algorithm \ref{alg:rmw-fair-auc} on its utility and fairness performance against other baselines in Table \ref{tab:comparison}. \texttt{MinimaxFair} struggles on both metrics. This is due to the objective differences: \texttt{MinimaxFair} is aiming at classification and the disparity in accuracy. Therefore it may overlook the disparity caused by group-level AUCs. \texttt{InterFairAUC} does not perform too well on  fairness as expected. This is because the regularization only focuses on the difference of inter-group AUCs and eventually the unfairness of intra-group AUCs will dominate the min/max ratio, especially on \texttt{Adult} and \texttt{Bank} where original gaps of intra-group AUCs are large. It only increase the overall AUC on \texttt{Bank} because the dataset is imbalanced. \texttt{EqualAUC} achieves competitive min/max ratios on all datasets. However it suffers from a large utility drop compared to the non-fairness intended baseline \texttt{AUCMax}. On the contrary, Algorithm \ref{alg:rmw-fair-auc} preserves the largest overall AUC scores. It achieves over 99\% on \texttt{Adult} and \texttt{Bank} compared to \texttt{AUCMax}, and achieves even higher score on \texttt{Compas} and \texttt{Default}. This phenomenon is consistent with our argument in \eqref{eq:max-min-auc} that the proposed minimax objective does not conflict with AUC maximization. The further improvement may due to the initialization via \texttt{AUCMax}. Using fairness aware retraining to boost utility has been observed in the literature recently, especially when the motivation is seeking minimax fairness or its analog \citep{globus-harris2022algorithmic}. In the meantime, for the fairness measurement, Algorithm \ref{alg:rmw-fair-auc} provides a competitive min/max ratio versus \texttt{EqualAUC}. Even on \texttt{Bank}, \texttt{EqualAUC} is not significantly more fair (p value $>0.01$) than Algorithm \ref{alg:rmw-fair-auc} while Algorithm \ref{alg:rmw-fair-auc} has a better overall AUC metric (p value $<0.001$). On \texttt{Compas}, Algorithm \ref{alg:rmw-fair-auc} even reaches the best overall score and the ratio simultaneously. Therefore it can be concluded that Algorithm \ref{alg:rmw-fair-auc} has advantages over \texttt{EqualAUC} in the sense of better trade-offs between overall AUC and fairness.

\begin{table*}[th!]
\centering
\setlength\tabcolsep{0.5pt}
{
\begin{tabular}{|c|cc|cc|cc|cc|}
\hline
& \multicolumn{2}{c|}{\texttt{Adult}} & \multicolumn{2}{c|}{\texttt{Bank}} & \multicolumn{2}{c|}{\texttt{Compas}}   & \multicolumn{2}{c|}{\texttt{Default}} \\ \hline 
\diagbox[width=7em, height=2em]{Algorithm}{Metric} & Overall & Min/Max & Overall & Min/Max & Overall & Min/Max & Overall & Min/Max \\ \hline
\texttt{AUCMax} & $\mathbf{.902\pm.002}$ & $.823\pm.005$ & $\underline{.910\pm.002}$ & $.780\pm.018$ & $.732\pm.004$ & $.779\pm.041$ & $.763\pm.005$ & $.871\pm.017$ \\ \hline
\texttt{MinimaxFair} & $.894\pm.007$ & $.905\pm.010$ & $.885\pm.003$ & $.827\pm.004$ & $.730\pm.001$ & $.913\pm.029$ & $.753\pm.002$ & $.909\pm.021$ \\ \hline
\texttt{InterFairAUC} & $.894\pm.004$ & $.950\pm.003$ & $\mathbf{.912\pm.001}$ & $.836\pm.018$ & $\underline{.738\pm.003}$ & $.939\pm.014$ & $\underline{.763\pm.003}$ & $.952\pm.024$ \\ \hline
\texttt{EqualAUC} & $.886\pm.003$ & $\underline{.953\pm.004}$ & $.866\pm.005$ & $\mathbf{.891\pm.025}$ & $.731\pm.003$ & $\underline{.956\pm.012}$ & $\underline{.761\pm.002}$ & $\mathbf{.972\pm.020}$ \\ \hline
\texttt{Algorithm \ref{alg:rmw-fair-auc}} & $\underline{.901\pm.004}$ & $\mathbf{.953\pm.002}$ & $.907\pm.004$ & $\underline{.858\pm.014}$ & $\mathbf{.741\pm.004}$ & $\mathbf{.961\pm.012}$ & $\mathbf{.767\pm.002}$ & $\underline{.968\pm.013}$ \\ \hline
\end{tabular}
\caption{\it Comparison of Algorithm \ref{alg:rmw-fair-auc} versus baselines. 'Overall' is the AUC score on the full dataset, measuring the utility. 'Min/Max' is the minimum group-level AUC score over the maximum one, measuring the fairness. The numbers are reported as 'Mean $\pm$ Standard Deviation'. Best results at each column are highlighted in \textbf{bold}. Second best are highlighted in \underline{underline}.}
\label{tab:comparison}
}
\vspace{-10pt}
\end{table*}



\begin{figure*}[!ht]
\includegraphics[width=\linewidth]{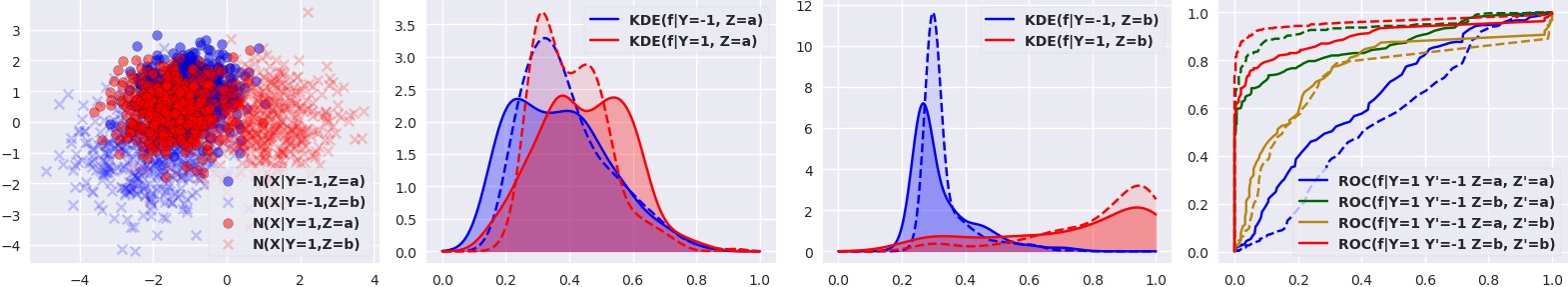}
\caption{\it Experiments on synthetic datasets. Dashed curves are baseline results via \texttt{AUCMax}.}
\label{fig:synthetic}
\vspace{-10pt}
\end{figure*}

\subsection{Synthetic Datasets Experiments}

\begin{figure}[!ht]
\begin{subfigure}{\linewidth}
    \includegraphics[width=\linewidth]{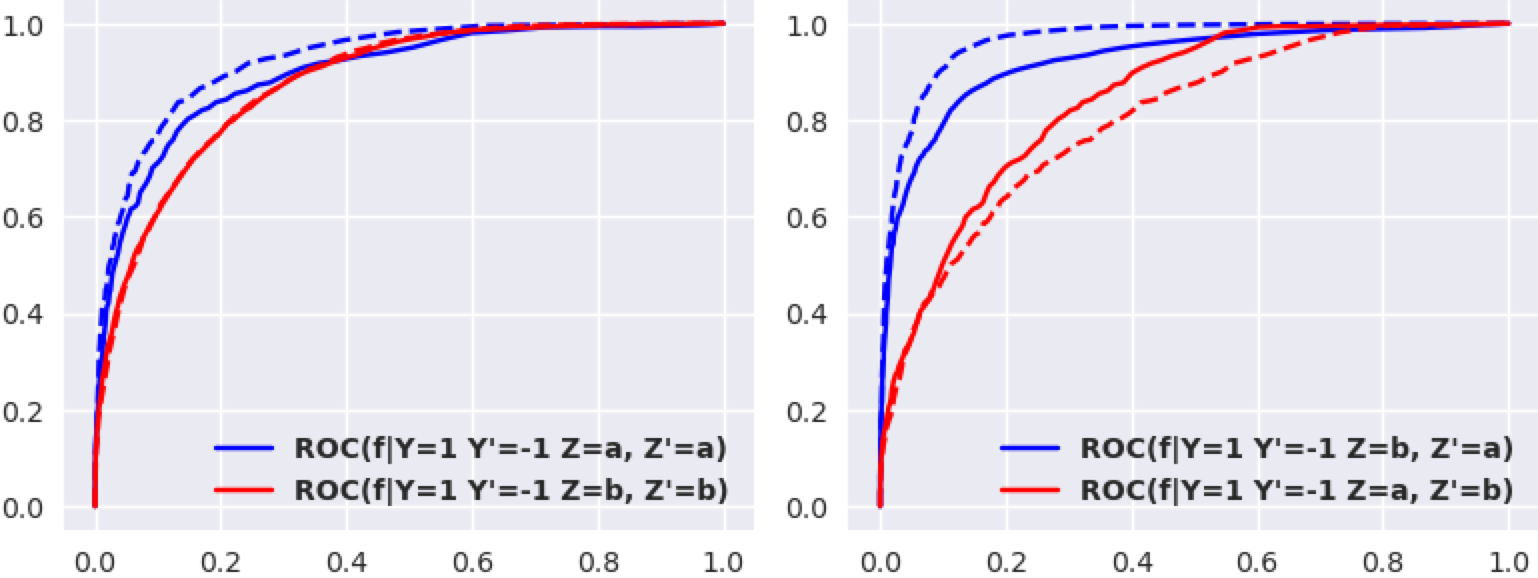}
    \caption{\it Intra-group AUCs only}
\end{subfigure}%
    \hfill%
\begin{subfigure}{\linewidth}
    \includegraphics[width=\linewidth]{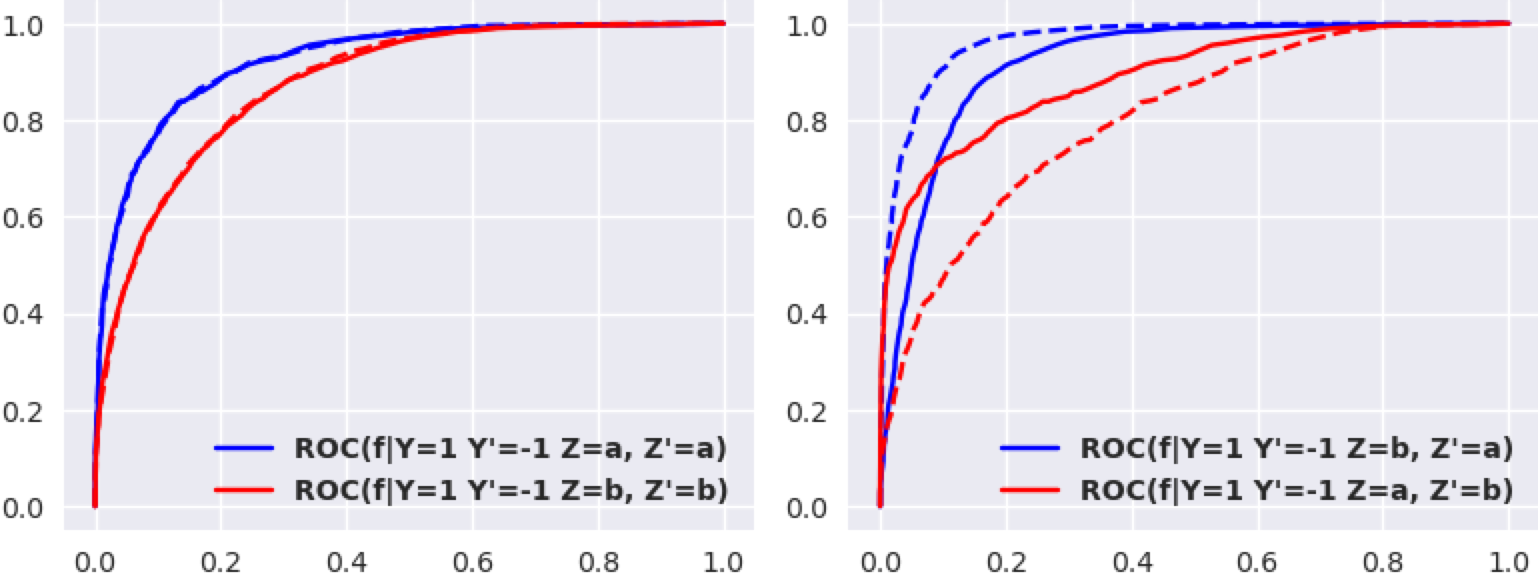}
    \caption{\it Inter-group AUCs only}
\end{subfigure}
\caption{\it Ablation study of Algorithm \ref{alg:rmw-fair-auc} (solid curves) versus \texttt{AUCMax} (dashed curves) on \texttt{Adult}.}
\label{fig:adult-ablation}
\vspace{-20pt}
\end{figure}

In this subsection, we design synthetic datasets to further understand the effectiveness of minimax AUC Algorithm \ref{alg:rmw-fair-auc} versus the unfair baseline. In particular, we generate two dimensional data points from different Gaussian distributions conditioned on the class label and the group attribute. See Appendix \ref{sec:exp} for more details. The generated data points are shown in the left most plot in Figure \ref{fig:synthetic}. We generate the same number of samples for each label and group partition so that the priors in \eqref{eq:auc-mixture-decomposition} can be ignored and all group-level AUCs contribute the same towards the overall AUC. Furthermore, we intentionally design the overlap between the positive and negative samples for group $a$ so that the scoring function faces difficulty differentiating them. As we see from the middle left plot in Figure \ref{fig:synthetic}, the baseline \texttt{AUCMax} learns almost nothing from group $a$ as the positive and negative KDEs are largely overlapped. Yet this fact does not stop the algorithm from maximizing the overall AUC as it keeps the negative KDE from group $b$ low. Therefore all three $AUC_{a,b}, AUC_{b,a}, AUC_{b,b}$ are optimized except $AUC_{a,a}$. However this is severely unfair for the positive sample from group $a$. Algorithm \ref{alg:rmw-fair-auc} mitigates this issue by separating the positive and negative KDEs from group $a$ to the correct direction. It also pushes the negative KDE from group $b$ to be lower so that positives from group $a$ maintain higher scores than negatives from group $b$. This is based on the sacrifice of the positives' scores from group $b$. Yet it may be acceptable as they are originally high enough.

\subsection{Ablation Studies}

In this subsection, we apply Algorithm \ref{alg:rmw-fair-auc} on the maximin AUC problem \eqref{eq:max-min-auc} with only intra-group AUCs or inter-group AUCs. We only report the results on \texttt{Adult} due to space limits. See Appendix \ref{sec:exp} for results on other datasets. In Figure \ref{fig:adult-ablation}(a), when intra-group AUCs are the only objective, Algorithm \ref{alg:rmw-fair-auc} slightly mitigates the intra-group AUC unfairness, yet it can potentially help mitigate the inter-group AUC unfairness as well. However, this improvement is very limited compared to lower half in Figure \ref{fig:conv}(a), where the unfairness is almost eliminated. In Figure \ref{fig:adult-ablation}(b), when inter-group AUCs are the only objective, inter-group fairness is more or less achieved with crossing of two curves, yet it is not guaranteed to mitigate the unfairness within intra-group AUCs.

\section{Conclusion}

In this paper, we propose a minimax learning framework for AUC maximization with fairness concern. Our framework addresses both intra-group and inter-group AUC unfairness as well as the discrepancy in between. Based on this framework, we design Algorithm \ref{alg:rmw-fair-auc}: an efficient algorithm with stochastic gradient descent on the model and mirror ascent on the group weights. We provide a non-trivial analysis and show that Algorithm \ref{alg:rmw-fair-auc} converges to the optimal solution in terms of the minimum group-level AUC in the nonconvex-concave setting. We conduct numerical experiments on both synthetic and real-world datasets to validate its utility and the fairness performance. One future direction is to consider fairness metrics involving group-level partial AUCs \citep{narasimhan2017support,yang2021all}. 

\section{Acknowledgement}
This work is supported by SUNY-IBM AI Research Alliance and NSF grants (IIS-2103450, IIS-2110546 and DMS-2110836).

\bibliography{reference}

\appendix

\onecolumn
\numberwithin{equation}{section}
\numberwithin{theorem}{section}
\numberwithin{figure}{section}
\numberwithin{table}{section}

\setcounter{secnumdepth}{1} 

\begin{center}
\textbf{\Large Appendix for ``Minimax  AUC Fairness: Efficient Algorithm with Provable Convergence"}
\end{center}

\section{AUC Related Fairness Metrics \label{sec:literature}}

In this section, we provide throughout discussion on the related work which define and address unfairness concerns in AUC-related problems in the literature. We apply the notation in the main text for consistency. 

\citet{borkan2019nuanced} is one of the first work which study the intra-group AUC fairness under the name of subgroup AUC fairness. \citet{dixon2018measuring} also target at similar intra-group AUC scores but is defined based on subsampling $B \subset S$, so that 
\[\widehat{\auc}(f_\theta; B^{z^+}, B^{z-}) = \widehat{\auc}(f_\theta; B^{z'+}, B^{z'-}),\]
where $|B^{z+}| + |B^{z-}| = |B^{z'+}| + |B^{z'-}|$ is required. For inter-group AUC fairness, \citet{kallus2019fairness} observe  the unfairness in this cross group definition. \citet{beutel2019fairness} define both of these pairwise fairness metrics in the task of pairwise ranking for recommendation systems, which are named intra-group pairiwse accuracy and inter-group pairwise accuracy originally in their work. In an independent work, \citet{narasimhan2020pairwise} also study intra-group and inter-group pairwise accuracy. It is worth noting that they briefly mention the fairness metric
\[\auc_{z, z'} = \kappa, \forall z,z'\in \Zcal,\]
which coincides with our Eq. \eqref{eq:equal-all} to a large extent. Such identity is named as pairwise equal opportunity in their work, as an analogy  of the equal opportunity for the binary classification task \citep{hardt2016equality}. 

Our work is  different from the work \citep{narasimhan2020pairwise} in the following aspects. Firstly, the motivation and application setting are unrelated. \citet{narasimhan2020pairwise} consider pairwise ranking  task whereas we focus on AUC maximization, which can be applied to in bipartite ranking and imbalanced classification tasks. Moreover, we provide a concrete application scenario (cf. fair hiring) to justify the necessity of debiasing all group-level AUCs. Secondly, our learning objective and theirs are not the same. \citet{narasimhan2020pairwise} integrates the discrepancy of group-level AUCs into their framework as either constrained optimization or robust optimization problem. This is because their fairness metric naturally requires an pre-defined "fairness level" $\kappa$, which could be difficult to determine before training/learning. On the other hand, our learning objective actually put both intra- and inter- group AUCs inside following the Rawlsian principle, which does not depend on the choice of "fairness level" and hence does not introduce additional harm towards the AUC maximization goal. Thirdly, the training procedure is different. \citeauthor{narasimhan2020pairwise} also apply a minimax framework to train their model. However, the minimax reformulation there is based on the introduction of Lagrange multipliers and it still solves the original constrained optimization problem, such minimax problem can have a vague stopping criterion especially when the model complexity is high (e.g. deep neural networks). As we argued in Remark \ref{rem:implication-convergence}, our convergence analysis provides a natural stopping criterion for the algorithm by looking at the saturation of the maximum group-level AUC, even for the nonconvex problem involving deep neural networks.  Finally, there are experimental differences between our work and  \citet{narasimhan2020pairwise}. For the ranking task with discrete groups, \citeauthor{narasimhan2020pairwise} only consider either inter-group AUC fairness or BNSP which we will introduce later. As we discussed in Figure \ref{fig:limitation-inter} and observed in our experiment, this treatment can overlook the concern of intra-group AUC fairness, whereas our approach can  handle both intra-group and inter-group AUC fairness as demonstrated in the experiments on the datasets of \texttt{Adult} and \texttt{Bank}.

There are  AUC related fairness metrics beyond intra-group and inter-group AUCs. \citet{borkan2019nuanced} propose 1) the Backgroup Negative Subgroup Positice (BNSP) AUC fairness,
\[
\widehat{\auc}(f_\theta; S^{z+}, S^-) = \widehat{\auc}(f_\theta; S^{z'+}, S^-)
\]
which enforces that the positive example from any group has the same probability of being ranked higher than a negative example, and 2) the Backgroup Positive Subgroup Negative (BPSN) AUC fairness,
\[
\widehat{\auc}(f_\theta; S^+, S^{z-}) = \widehat{\auc}(f_\theta; S^+, S^{z'-}),
\]
which can be viewed as the ranking extension of parity in false positive rates in binary classification \citep{hardt2016equality}. \citet{vogel2021learning} extend all the discussion above into ROC based fairness metrics. Note that group level TPR and FPR are given as 
\begin{align*}
\tpr_z(t, f_\theta) = & \Pbb[f_\theta(X) > t| Y=1, Z=z],\\
\fpr_z(t, f_\theta) = & \Pbb[f_\theta(X) > t| Y=-1, Z=z].
\end{align*}
The ROC curve is then defined as the plot of $\tpr(t, f_\theta)$ against $\fpr(t, f_\theta)$ for different values of $t$, i.e., for $t \in [0,1]$, group level ROC is given as
\[\text{ROC}_{z, z'}: t \mapsto \tpr_z\circ\fpr_{z'}^{-1}(t).
\] 
Consider the intra-group ROC fairness as an example. It can be similarly defined as the AUC case as follows
\[\text{ROC}_{a, a}(t) = \text{ROC}_{b, b}(t), \quad\forall t \in[0, 1].\]

\section{Stratified Random Sampling \label{sec:stratified-sampling}}

In this section, we introduce the \texttt{StratifiedSampler} used in Algorithm \ref{alg:rmw-fair-auc} and prove the unbiasedness of subsampled gradients in Proposition \ref{prop:stratified-sampling}. The \texttt{StratifiedSampler} is described in Algorithm \ref{alg:group-label-sampler}.

\begin{algorithm}[ht!]
\caption{Per Group and Label Stratified Sampler \label{alg:group-label-sampler}}
\begin{algorithmic}[1]
\STATE {\bf Inputs:} Dataset $S$ with label $Y$ and protected attribute $Z$, batch size $m$
\FOR{$z\in \{a, b\}$ and $y \in \{\pm 1\}$}
\STATE Uniformly sample without replacement $B^{zy}$ from $S^{zy}$ with size $m^{zy} \coloneqq \lceil m \cdot (n^{zy}/n)\rceil$
\ENDFOR
\STATE {\bf Outputs:} $B = \cup_{z, y} B^{zy}$
\end{algorithmic}
\end{algorithm}

The next proposition shows that the stochastic gradients constructed by Algorithm \ref{alg:group-label-sampler} are unbiased estimators of the full sample gradients. The proof is inspired by classical U-statistics theory \citep{clemenccon2008ranking}.

\begin{proposition}[Proposition \ref{prop:stratified-sampling} restated]
For any fixed $\theta \in \Theta$. Let $B \subset S$ be given by Algorithm \ref{alg:group-label-sampler}. The following statement holds
\begin{align*}
\Ebb_B[\hat{R}^\ell(\theta; B)] = & \hat{R}^\ell(\theta; S),\\
\Ebb_B[\nabla\hat{R}^\ell(\theta; B)] = & \nabla\hat{R}^\ell(\theta; S).
\end{align*}
\end{proposition}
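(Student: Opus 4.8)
The plan is to exploit the fact that each component $\hat{R}^\ell_{z,z'}$ is a two-sample average (an incomplete two-sample U-statistic) over the positive stratum $S^{z+}$ and the negative stratum $S^{z'-}$, and that \texttt{StratifiedSampler} draws the sub-batch of each stratum independently and uniformly without replacement. Because $\hat{R}^\ell(\theta;\cdot)$ is simply the vector of its four components, it suffices to establish the scalar identity $\Ebb_B[\hat{R}^\ell_{z,z'}(\theta;B)] = \hat{R}^\ell_{z,z'}(\theta;S)$ for an arbitrary fixed pair $(z,z')$; the gradient claim will then follow by an essentially identical computation, which I postpone to the end.

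First I would write the batch risk explicitly as
\[
\hat{R}^\ell_{z,z'}(\theta;B) = \frac{1}{m^{z+}m^{z'-}}\sum_{a\in B^{z+}}\sum_{b\in B^{z'-}} \ell\bigl(f_\theta(a)-f_\theta(b)\bigr),
\]
noting that each batch size $m^{zy}=\lceil m\,(n^{zy}/n)\rceil$ is deterministic, so the normalizing factor is a constant that commutes with $\Ebb_B$. Introducing the indicators $\Ibb[a\in B^{z+}]$ and $\Ibb[b\in B^{z'-}]$ to rewrite the double sum as one over the full strata $S^{z+}\times S^{z'-}$, and applying linearity of expectation, reduces the problem to computing the joint inclusion probability $\Pbb[a\in B^{z+},\,b\in B^{z'-}]$ for each fixed pair $(a,b)$.

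The crux is this inclusion probability. Since $B^{z+}$ is a uniform size-$m^{z+}$ subset of $S^{z+}$ drawn without replacement, a standard symmetry argument gives the marginal $\Pbb[a\in B^{z+}] = m^{z+}/n^{z+}$ for every $a$, and likewise $\Pbb[b\in B^{z'-}]=m^{z'-}/n^{z'-}$. Because the positive stratum $S^{z+}$ and the negative stratum $S^{z'-}$ are disjoint strata sampled in \emph{separate} iterations of the loop in Algorithm \ref{alg:group-label-sampler}, the two draws are independent, so the joint probability factorizes as $(m^{z+}/n^{z+})(m^{z'-}/n^{z'-})$. Substituting this back, the factors $m^{z+}m^{z'-}$ cancel against the normalizer, leaving exactly $\frac{1}{n^{z+}n^{z'-}}\sum_{a}\sum_{b}\ell\bigl(f_\theta(a)-f_\theta(b)\bigr)=\hat{R}^\ell_{z,z'}(\theta;S)$.

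For the gradient identity I would observe that the sampling law of $B$ does not depend on $\theta$ and that $\Ebb_B$ is a finite weighted average over the (finitely many) possible batches; hence $\nabla_\theta$ and $\Ebb_B$ interchange freely, and rerunning the identical factorization with $\ell$ replaced by $\nabla_\theta\ell\bigl(f_\theta(a)-f_\theta(b)\bigr)$ yields the claim. I expect the only real subtlety to be the independence-across-strata point: one must verify that the \textit{intra-group} case $z=z'$ still pairs the \emph{positive} and \emph{negative} strata of the same group, which are drawn in distinct loop iterations and are therefore independent, so no spurious correlation is introduced into the factorization.
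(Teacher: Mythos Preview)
Your proposal is correct and follows essentially the same approach as the paper's proof: both exploit the independence of the per-stratum draws and compute the per-pair inclusion probability, which the paper does via an explicit binomial-coefficient count $\frac{\binom{n-1}{m-1}}{m\binom{n}{m}}=\frac{1}{n}$ and you do via the equivalent symmetry argument $\Pbb[a\in B^{z+}]=m^{z+}/n^{z+}$. Your indicator/inclusion-probability presentation is a bit more streamlined, but the underlying argument and its key step (factorization across the disjoint positive and negative strata, even when $z=z'$) are identical.
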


\begin{proof}
The gradient w.r.t. $\lambda$ is given by
\[
(\partial_\lambda F(\theta, \lambda; S))_{z, z'} = \partial_{\lambda_{z, z'}}F(\theta, \lambda; S^{z+}\cup S^{z'-}) =  \frac{1}{n^{z+}} \frac{1}{n^{z'-}}\sum_{i=1}^{n^{z+}}\sum_{j=1}^{n^{z'-}}\ell(f_\theta(\xbf_i^{z+}) - f_\theta(\xbf_j^{z'-})),\, \forall z,z' \in \Zcal,
\]
and the gradient w.r.t. $\theta$ is given by
\[
\partial_\theta F(\theta, \lambda; S) = \sum_{z=1}^{k} \sum_{z'=1}^{k} \lambda_{z, z'}\partial_\theta\hat{R}^\ell(\theta;  S^{z+}\cup S^{z'-}) = \sum_{z=1}^{k} \sum_{z'=1}^{k} \frac{\lambda_{z, z'}}{n^{z+}n^{z'-}} \sum_{i=1}^{n^{z+}}\sum_{j=1}^{n^{z'-}}\nabla_\theta \ell(f_\theta(\xbf_i^{z+}) - f_\theta(\xbf_j^{z'-})).
\]
We first consider the proof on the gradient w.r.t. $\lambda$. In particular, the expectation of the stochastic gradient w.r.t. $\lambda$ is given by 
\begin{align*}
& \Ebb_{B \sim \texttt{Alg \ref{alg:group-label-sampler}}}[\partial_\lambda F(\theta, \lambda; B)]_{z, z'} = \Ebb_{B^{z+}, B^{z'-}}[\partial_{\lambda_{z, z'}} F(\theta, \lambda; B^{z+} \cup B^{z'-})] \\
= & \sum_{B^{z+} \subset S^{z+}} \sum_{B^{z'-} \subset S^{z'-}}\frac{1}{\binom{n^{z+}}{m^{z+}}} \frac{1}{\binom{n^{z'-}}{m^{z'-}}}\partial_{\lambda_{z, z'}} F(\theta, \lambda; B^{z+} \cup B^{z'-})\\
= & \sum_{B^{z+} \subset S^{z+}} \sum_{B^{z'-} \subset S^{z'-}}\frac{1}{\binom{n^{z+}}{m^{z+}}} \frac{1}{\binom{n^{z'-}}{m^{z'-}}} \frac{1}{m^{z+}} \frac{1}{m^{z'-}}\sum_{\xbf_i^{z^+} \in B^{z+}}\sum_{\xbf_j^{z'-} \in B^{z'-}}\ell(f_\theta(\xbf_i^{z+}) - f_\theta(\xbf_j^{z'-})) \\
= & \frac{1}{\binom{n^{z+}}{m^{z+}}} \frac{1}{\binom{n^{z'-}}{m^{z'-}}} \frac{1}{m^{z+}} \frac{1}{m^{z'-}} \sum_{\substack{\xbf_i^{z^+} \in S^{z+}\\\xbf_j^{z'-} \in S^{z'-}}} \sum_{\substack{B^{z+}\setminus\{\xbf_i^{z+}\} \subset S^{z+}\setminus\{\xbf_i^{z+}\} \\B^{z'-}\setminus\{\xbf_j^{z'-}\} \subset S^{z'-}\setminus\{\xbf_j^{z'-}\}}}\ell(f_\theta(\xbf_i^{z+}) - f_\theta(\xbf_j^{z'-}))\\
= & \frac{1}{\binom{n^{z+}}{m^{z+}}} \frac{1}{\binom{n^{z'-}}{m^{z'-}}} \frac{1}{m^{z+}} \frac{1}{m^{z'-}} \sum_{i=1}^{n^{z+}} \sum_{j=1}^{n^{z'-}} \binom{n^{z+}-1}{m^{z+}-1} \binom{n^{z'-}-1}{m^{z'-}-1} \ell(f_\theta(\xbf_i^{z+}) - f_\theta(\xbf_j^{z'-}))\\
= &  \frac{1}{n^{z+}} \frac{1}{n^{z'-}}\sum_{i=1}^{n^{z+}}\sum_{j=1}^{n^{z'-}}\ell(f_\theta(\xbf_i^{z+}) - f_\theta(\xbf_j^{z'-})), \, \forall z,z' \in \Zcal.
\end{align*}
The expectation of the stochastic gradient w.r.t. $\theta$ is given by 
\begin{align*}
\Ebb_{B \sim \texttt{Alg \ref{alg:group-label-sampler}}}[\partial_\theta F(\theta, \lambda; B)] = &  \Ebb_{B \sim \texttt{Alg \ref{alg:group-label-sampler}}}[ \sum_{z=1}^{k} \sum_{z'=1}^{k}\lambda_{z, z'} \partial_\theta\hat{R}^\ell(\theta; B^{z+} \cup B^{z'-})] \\
= & \sum_{z=1}^{k} \sum_{z'=1}^{k}\lambda_{z, z'} \Ebb_{B^{z+}, B^{z'-}}[\partial_\theta\hat{R}^\ell(\theta; B^{z+} \cup B^{z'-})] \\
= & \sum_{z=1}^{k} \sum_{z'=1}^{k} \frac{\lambda_{z, z'}}{n^{z+}n^{z'-}} \sum_{i=1}^{n^{z+}}\sum_{j=1}^{n^{z'-}}\nabla_\theta \ell(f_\theta(\xbf_i^{z+}) - f_\theta(\xbf_j^{z'-})).
\end{align*}
THe proof is complete by calling the definiton of $\hat{R}$. 
\end{proof}


\section{Convergence Analysis \label{sec:proof-convergence}}

In this section, we provide theoretical justification of Theorem \ref{thm:convergence}. Our analysis is inspired by the convergence of stochastic gradient descent ascent (SGDA) in the nonconvex-concave Euclidean space \citep{lin2019gradient}. We provide novel analysis on visualizing the exponential weights update as the gradient ascent in the $\ell_1$ setting.  
It is worth noting that $P$ is not necessarily differentiable even if $R$ is Lipschitz continuous and smooth. Fortunately, the following structural lemma shows that $P$ is weakly convex and Lipschitz continuous.

\begin{lemma}[\citep{lin2019gradient}]\label{lem:moreau-envelope-property}
Assume Assumption \ref{assmp:lipschitz-continuity} and \ref{assmp:smoothness} hold, the function $P(\cdot; S) = \max_{\lambda \in \Lambda_{k^2}} F(\cdot, \lambda; S)$ is $L$-weakly convex and $G$-Lipschitz with $\partial_\theta F(\theta, \hat{\lambda}(\theta); S) \in \partial P(\theta; S)$.
\end{lemma}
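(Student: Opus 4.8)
The plan is to verify the three asserted properties separately, exploiting that $F(\theta,\lambda)=\lambda^\top\hat{R}^\ell(\theta)$ is linear (hence concave and continuous) in $\lambda$ over the compact simplex $\Lambda$ and, by Assumption~\ref{assmp:smoothness}, $L_\theta$-smooth in $\theta$ for each fixed $\lambda$.

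First I would establish weak convexity. Recall that a function is $L$-weakly convex precisely when adding $\tfrac{L}{2}\|\cdot\|_2^2$ renders it convex. Since an $L_\theta$-smooth function satisfies the descent inequality $F(\theta',\lambda)\ge F(\theta,\lambda)+\langle\nabla_\theta F(\theta,\lambda),\theta'-\theta\rangle-\tfrac{L_\theta}{2}\|\theta'-\theta\|_2^2$, each map $F(\cdot,\lambda)+\tfrac{L_\theta}{2}\|\cdot\|_2^2$ is convex. Writing
\[
P(\theta)+\tfrac{L_\theta}{2}\|\theta\|_2^2=\max_{\lambda\in\Lambda}\Big(F(\theta,\lambda)+\tfrac{L_\theta}{2}\|\theta\|_2^2\Big),
\]
the right-hand side is a pointwise maximum of convex functions and is therefore convex, so $P$ is $L_\theta$-weakly convex; since $L\ge L_\theta$, it is $L$-weakly convex. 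Next I would prove $G$-Lipschitz continuity via the standard bound on a difference of suprema: for any $\theta,\theta'$,
\[
|P(\theta)-P(\theta')|\le\max_{\lambda\in\Lambda}|F(\theta,\lambda)-F(\theta',\lambda)|\le G_\theta\|\theta-\theta'\|_2,
\]
where the final step applies the mean value inequality together with the gradient bound $\|\lambda^\top\nabla\hat{R}^\ell(\theta;S)\|_2\le G_\theta$ from Assumption~\ref{assmp:lipschitz-continuity}. As $G\ge G_\theta$, $P$ is $G$-Lipschitz.

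Finally, the subgradient inclusion is where the argument is delicate, and I expect it to be the main obstacle. Because $\Lambda$ is compact and both $F$ and $\nabla_\theta F$ are jointly continuous, the maximizer set $\Lambda^\star(\theta)=\arg\max_{\lambda\in\Lambda}F(\theta,\lambda)$ is nonempty (indeed, by linearity in $\lambda$ the maximum is attained at a vertex, corresponding to the group of largest risk), and Danskin's theorem yields $\partial P(\theta)=\mathrm{conv}\{\nabla_\theta F(\theta,\lambda):\lambda\in\Lambda^\star(\theta)\}$. Taking any maximizer $\hat\lambda(\theta)$ then gives $\partial_\theta F(\theta,\hat\lambda(\theta))\in\partial P(\theta)$. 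The subtlety is that $P$ is nonsmooth, so one must work with the Clarke subdifferential rather than the classical convex one and justify the Danskin characterization in the weakly-convex regime. The weak convexity established in the first step is exactly what guarantees $P$ is subdifferentially regular, so that the Clarke and regular subdifferentials coincide and the maximizer-gradient indeed lies in $\partial P(\theta)$ as claimed.
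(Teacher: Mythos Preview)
Your proposal is correct and follows the standard argument that underlies the cited result of \citet{lin2019gradient}; the paper itself does not supply a proof for this lemma but simply imports it from that reference. Your three-step decomposition (weak convexity via the pointwise maximum of the convex maps $F(\cdot,\lambda)+\tfrac{L_\theta}{2}\|\cdot\|_2^2$, Lipschitzness from the uniform gradient bound, and the subgradient inclusion via Danskin with regularity of weakly convex functions) is exactly the route taken in the original source, so there is nothing to contrast.
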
  

For weakly convex function $P$, \citet{davis2019stochastic} proposed an alternative notion of stationarity based on the Moreau envolope, i.e. $P_{1/2L}(\theta) = \min_{\gamma} P(\gamma) + L\|\gamma - \theta\|_2^2$. Once the norm of the gradient of Moreau envelope is small, i.e. $\|\nabla P_{1/2L}(\theta)\|_2 \leq \epsilon$, one can show that $\theta$ is close to a point $\hat{\theta}$ which has at least one small subgradient.

\begin{lemma}[\citep{davis2019stochastic}]\label{lem:approximate-stationary-point}
If $\theta$ satisfies $\|\nabla P_{1/2L}(\theta)\|_2 \leq \epsilon$, there exists $\hat{\theta}$ such that $\min_{\xi \in \partial P(\hat{\theta})} \|\xi\|_2 \leq \epsilon$ and $\|\theta - \hat{\theta}\|_2 \leq \epsilon / 2L$.
\end{lemma}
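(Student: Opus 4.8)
The plan is to obtain both conclusions from the standard properties of the Moreau envelope of a weakly convex function, following \citet{davis2019stochastic}. By Lemma~\ref{lem:moreau-envelope-property}, $P$ is $L$-weakly convex, so the proximal map defining the envelope,
\[
\hat{\theta} \coloneqq \arg\min_{\gamma} \big\{ P(\gamma) + L\|\gamma - \theta\|_2^2 \big\},
\]
has an $L$-strongly convex objective: the added quadratic $L\|\gamma-\theta\|_2^2$ contributes curvature $2L$, which strictly dominates the weak-convexity modulus $L$ of $P$. Hence $\hat{\theta}$ is well defined and unique, and it is the natural candidate for the claimed nearby point.

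First I would record two classical facts about $P_{1/2L}$ valid for $L$-weakly convex $P$. The envelope is continuously differentiable with
\[
\nabla P_{1/2L}(\theta) = 2L(\theta - \hat{\theta});
\]
and writing the first-order optimality condition for the strongly convex subproblem at its unique minimizer $\hat{\theta}$ gives $0 \in \partial P(\hat{\theta}) + 2L(\hat{\theta} - \theta)$, i.e.
\[
2L(\theta - \hat{\theta}) \in \partial P(\hat{\theta}).
\]

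With these in hand the conclusion is immediate. Assuming $\|\nabla P_{1/2L}(\theta)\|_2 \leq \epsilon$, the gradient formula yields
\[
\|\theta - \hat{\theta}\|_2 = \tfrac{1}{2L}\,\|\nabla P_{1/2L}(\theta)\|_2 \leq \tfrac{\epsilon}{2L},
\]
the proximity bound. For the stationarity bound, the optimality condition exhibits the specific subgradient $\xi = 2L(\theta - \hat{\theta}) \in \partial P(\hat{\theta})$, whose norm equals $\|\nabla P_{1/2L}(\theta)\|_2 \leq \epsilon$; hence $\min_{\xi \in \partial P(\hat{\theta})} \|\xi\|_2 \leq \epsilon$.

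The only nontrivial step — and the main obstacle — is justifying the differentiability of the envelope and the gradient formula $\nabla P_{1/2L}(\theta) = 2L(\theta - \hat{\theta})$. This is precisely where weak convexity is essential: it makes the subproblem strongly convex, so the minimizer is single-valued and the value function $P_{1/2L}$ inherits smoothness via a Danskin/envelope-type argument. Everything after that is bookkeeping, and since the argument only uses $L$-weak convexity and the definition of $\partial P$, it holds for our primal objective $P(\theta) = \max_{\lambda \in \Lambda} F(\theta, \lambda)$ once Lemma~\ref{lem:moreau-envelope-property} supplies its weak convexity.
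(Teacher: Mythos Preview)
Your argument is correct and is exactly the standard proof from \citet{davis2019stochastic}: take $\hat{\theta}$ to be the proximal point, use the envelope gradient formula $\nabla P_{1/2L}(\theta)=2L(\theta-\hat{\theta})$ and the first-order optimality condition $2L(\theta-\hat{\theta})\in\partial P(\hat{\theta})$. The paper itself does not prove this lemma but simply cites it, so there is nothing further to compare.
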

The analysis of inexact nonconvex subgradient descent \citep{davis2019stochastic} implies the following descent inequality on $P_{1/2L}(\theta_t; S)$. The proof of this lemma is a direct application of Lemma D.3 in \citet{lin2019gradient} hence it is omitted. One can easily check the $\ell_1$ setting on the dual variable $\lambda$ does not hurt the definition and the descent lemma on $P_{1/2L}$, which is defined again in the Euclidean space.

\begin{lemma}[]\label{lem:descent-on-primal}
Assume Assumption \ref{assmp:lipschitz-continuity}, \ref{assmp:smoothness} and \ref{assmp:bounded-variance} hold, let $\Delta_t = \Ebb[P(\theta_t; S) - F(\theta_t, \lambda_t; S)]$. Then Algorithm \ref{alg:rmw-fair-auc} yields
\[
\Ebb[P_{1/2L}(\theta_t; S)] \leq \Ebb[P_{1/2L}(\theta_{t-1}; S)] + 2\eta_\theta L \Delta_{t-1} - \frac{\eta_\theta}{4} \Ebb[\|\nabla P_{1/2L}(\theta_{t-1}; S)\|_2^2] + \eta_\theta^2 L^2(G^2 + \sigma^2/m).
\]
\end{lemma}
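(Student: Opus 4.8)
The plan is to run the standard Moreau-envelope descent argument for weakly convex functions, treating Algorithm \ref{alg:rmw-fair-auc} as inexact stochastic subgradient descent on the primal $P(\cdot; S)$. Fix iteration $t$ and introduce the proximal point $\hat{\theta}_{t-1} = \arg\min_{\theta} \{P(\theta; S) + L\|\theta - \theta_{t-1}\|_2^2\}$, so that $P_{1/2L}(\theta_{t-1}; S) = P(\hat{\theta}_{t-1}; S) + L\|\hat{\theta}_{t-1} - \theta_{t-1}\|_2^2$ and, by the standard identity for Moreau envelopes, $\nabla P_{1/2L}(\theta_{t-1}; S) = 2L(\theta_{t-1} - \hat{\theta}_{t-1})$. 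The entry point is the suboptimality bound $P_{1/2L}(\theta_t; S) \le P(\hat{\theta}_{t-1}; S) + L\|\hat{\theta}_{t-1} - \theta_t\|_2^2$, which holds because $\hat{\theta}_{t-1}$ is merely feasible, not optimal, for the minimization defining $P_{1/2L}(\theta_t; S)$.

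Writing the $\theta$-update as $\theta_t = \theta_{t-1} - \eta_\theta g_{t-1}$ with $g_{t-1} = \lambda_{t-1}^\top \nabla \hat{R}^\ell(\theta_{t-1}; B_t)$, I would expand the square, \[ \|\hat{\theta}_{t-1} - \theta_t\|_2^2 = \|\hat{\theta}_{t-1} - \theta_{t-1}\|_2^2 + 2\eta_\theta \langle g_{t-1}, \hat{\theta}_{t-1} - \theta_{t-1}\rangle + \eta_\theta^2 \|g_{t-1}\|_2^2, \] and then take conditional expectation over the mini-batch $B_t$. By Proposition \ref{prop:stratified-sampling} the stochastic gradient is unbiased, so $\Ebb_{B_t}[g_{t-1}] = \partial_\theta F(\theta_{t-1}, \lambda_{t-1}; S)$; Assumption \ref{assmp:lipschitz-continuity} together with the mini-batch version of Assumption \ref{assmp:bounded-variance} controls the second moment as $\Ebb_{B_t}\|g_{t-1}\|_2^2 \le G^2 + \sigma^2/m$, which yields the $\eta_\theta^2$ remainder term.

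The crux is the cross term $\langle \partial_\theta F(\theta_{t-1}, \lambda_{t-1}; S), \hat{\theta}_{t-1} - \theta_{t-1}\rangle$, where the duality gap $\Delta_{t-1}$ must be manufactured. Since $F(\cdot, \lambda_{t-1}; S)$ is $L$-weakly convex (being $L$-smooth by Assumption \ref{assmp:smoothness}), I would invoke the weak-convexity subgradient inequality to bound this inner product by $F(\hat{\theta}_{t-1}, \lambda_{t-1}; S) - F(\theta_{t-1}, \lambda_{t-1}; S) + \tfrac{L}{2}\|\hat{\theta}_{t-1} - \theta_{t-1}\|_2^2$. Two max/gap estimates then finish the algebra: $F(\hat{\theta}_{t-1}, \lambda_{t-1}; S) \le P(\hat{\theta}_{t-1}; S)$ by definition of $P$ as the maximum over $\lambda$, while $-F(\theta_{t-1}, \lambda_{t-1}; S) = -P(\theta_{t-1}; S) + [P(\theta_{t-1}; S) - F(\theta_{t-1}, \lambda_{t-1}; S)]$ isolates exactly the term whose expectation is $\Delta_{t-1}$. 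Finally, optimality of the proximal point gives $P(\hat{\theta}_{t-1}; S) - P(\theta_{t-1}; S) \le -L\|\hat{\theta}_{t-1} - \theta_{t-1}\|_2^2$; collecting the coefficients on $\|\hat{\theta}_{t-1} - \theta_{t-1}\|_2^2$ leaves a net negative multiple, which converts into $-\tfrac{\eta_\theta}{4}\Ebb[\|\nabla P_{1/2L}(\theta_{t-1}; S)\|_2^2]$ through the gradient identity above.

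I expect the main obstacle to be the bookkeeping that turns the weak-convexity inequality into precisely the claimed coefficients, in particular keeping the sign on the $\|\hat{\theta}_{t-1} - \theta_{t-1}\|_2^2$ terms negative so that they can be absorbed into the Moreau-envelope gradient norm rather than the other way around. One reassuring simplification, which the paper already flags, is that although $\lambda$ is updated by mirror ascent in the $\ell_1$/simplex geometry, this lemma concerns only the Euclidean $\theta$-dynamics: $P_{1/2L}$ is defined purely over $\theta$, so the dual geometry never enters here and the argument reduces to the Euclidean descent estimate (Lemma D.3 of \citet{lin2019gradient}), with the duality gap $\Delta_{t-1}$ supplied by the concave maximization over $\Lambda$ and the constant tracking carried over verbatim.
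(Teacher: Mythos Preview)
Your proposal is correct and follows exactly the Moreau-envelope descent argument of Lemma~D.3 in \citet{lin2019gradient}, which is precisely what the paper invokes (the paper omits the proof entirely and defers to that reference). The key steps you identify---the proximal-point suboptimality bound, the square expansion with unbiasedness from Proposition~\ref{prop:stratified-sampling}, the weak-convexity inequality on $F(\cdot,\lambda_{t-1};S)$, and the splitting $-F(\theta_{t-1},\lambda_{t-1};S)=-P(\theta_{t-1};S)+[P(\theta_{t-1};S)-F(\theta_{t-1},\lambda_{t-1};S)]$ to isolate $\Delta_{t-1}$---are exactly the ingredients of that lemma, and your remark that the $\ell_1$/simplex geometry on $\lambda$ is irrelevant here mirrors the paper's own observation.
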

Our key technical novelty is to show that the error term $\Delta_t$ in the above lemma enjoys a recursive relation. The proof's idea is based on the Pythagorean ineuqlaity w.r.t. the Bergman divergence $D_\Phi$ in the $\ell_1$ setting. 

\begin{lemma}\label{lem:descent-on-dual}
Assume Assumption \ref{assmp:lipschitz-continuity}, \ref{assmp:smoothness} and \ref{assmp:bounded-variance} hold. Let $\Delta_t = \Ebb_t[P(\theta_t) - F(\theta_t, \lambda_t)]$. Then the following statement holds true for any $\tau \leq t-1$,
\begin{multline*}
\Delta_{t-1} \leq \eta_\theta G\sqrt{G^2+\sigma^2/m}(2t-2\tau-1) + (\Ebb_t[F(\theta_t, \lambda_t)] - F(\theta_{t-1}, \lambda_{t-1}))\\
+  \frac{1}{\eta_\lambda}(D(\lambda^\star(\theta_\tau)\parallel\lambda_{t-1}) - \Ebb_t[D(\lambda^\star(\theta_\tau)\parallel\lambda_t)]) + \eta_\lambda^2\sigma^2/m
\end{multline*}
\end{lemma}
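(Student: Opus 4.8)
Our goal is to control $\Delta_{t-1}=\Ebb[P(\theta_{t-1})-F(\theta_{t-1},\lambda_{t-1})]$ by inserting the \emph{past} maximizer $\lambda^\star(\theta_\tau)$ as a reference point and splitting
$$P(\theta_{t-1})-F(\theta_{t-1},\lambda_{t-1})=\big[P(\theta_{t-1})-F(\theta_{t-1},\lambda^\star(\theta_\tau))\big]+\big[F(\theta_{t-1},\lambda^\star(\theta_\tau))-F(\theta_{t-1},\lambda_{t-1})\big].$$
The first bracket is a \emph{drift} term, measuring how stale the reference $\lambda^\star(\theta_\tau)$ has become for the current iterate $\theta_{t-1}$; the second is the \emph{mirror-ascent regret} of $\lambda_{t-1}$ against $\lambda^\star(\theta_\tau)$, which is where the Bregman telescoping and the stochastic noise enter. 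The use of $\lambda^\star(\theta_\tau)$ with $\tau\le t-1$ (rather than $\lambda^\star(\theta_{t-1})$) is what keeps the reference measurable with respect to the history, which we need below.

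For the drift term I would use that $F(\cdot,\lambda)$ is $G$-Lipschitz in $\theta$ (Assumption \ref{assmp:lipschitz-continuity}) together with the optimality of $\lambda^\star(\theta_\tau)$ at $\theta_\tau$: chaining $F(\theta_{t-1},\lambda^\star(\theta_{t-1}))\le P(\theta_\tau)+G\|\theta_{t-1}-\theta_\tau\|_2$ and $P(\theta_\tau)\le F(\theta_{t-1},\lambda^\star(\theta_\tau))+G\|\theta_{t-1}-\theta_\tau\|_2$ yields $P(\theta_{t-1})-F(\theta_{t-1},\lambda^\star(\theta_\tau))\le 2G\|\theta_{t-1}-\theta_\tau\|_2$. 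I then bound the accumulated movement by the triangle inequality over the $\theta$-updates, $\|\theta_{t-1}-\theta_\tau\|_2\le\eta_\theta\sum_{s=\tau+1}^{t-1}\|\lambda_{s-1}^\top\nabla\hat R^\ell(\theta_{s-1};B_s)\|_2$, and take expectations via $\Ebb\|\cdot\|_2\le\sqrt{\Ebb\|\cdot\|_2^2}\le\sqrt{G^2+\sigma^2/m}$ (Proposition \ref{prop:stratified-sampling} for unbiasedness, then Assumptions \ref{assmp:lipschitz-continuity} and \ref{assmp:bounded-variance} on a batch of size $m$). This supplies the factor $2(t-1-\tau)$; the remaining $+1$ in $2t-2\tau-1$ will appear at the very end when I convert a one-step quantity from $\theta_{t-1}$ to $\theta_t$ by the same Lipschitz estimate.

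For the regret term I would exploit that $F(\theta,\cdot)$ is \emph{linear} in $\lambda$, so $F(\theta_{t-1},\lambda^\star(\theta_\tau))-F(\theta_{t-1},\lambda_{t-1})=\langle \hat R^\ell(\theta_{t-1};S),\lambda^\star(\theta_\tau)-\lambda_{t-1}\rangle$, and split this across the new iterate $\lambda_t$ as $\langle \hat R^\ell(\theta_{t-1};S),\lambda^\star(\theta_\tau)-\lambda_t\rangle+\big[F(\theta_{t-1},\lambda_t)-F(\theta_{t-1},\lambda_{t-1})\big]$. To the first piece I apply the three-point (Pythagorean) identity for the mirror step that produces $\lambda_t$ from $\lambda_{t-1}$ with stochastic gradient $g_t=\hat R^\ell(\theta_{t-1};B_t)$, namely $\langle g_t,\lambda^\star(\theta_\tau)-\lambda_t\rangle\le \tfrac1{\eta_\lambda}\big(D(\lambda^\star(\theta_\tau)\parallel\lambda_{t-1})-D(\lambda^\star(\theta_\tau)\parallel\lambda_t)\big)-\tfrac1{\eta_\lambda}D(\lambda_t\parallel\lambda_{t-1})$. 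Writing $g_t=\hat R^\ell(\theta_{t-1};S)+\delta_t$ with $\Ebb_t[\delta_t]=0$, the cross term $\langle\delta_t,\lambda^\star(\theta_\tau)-\lambda_{t-1}\rangle$ vanishes in conditional expectation (both $\lambda^\star(\theta_\tau)$ and $\lambda_{t-1}$ are measurable), while the residual $\langle\delta_t,\lambda_t-\lambda_{t-1}\rangle$ is absorbed against $-\tfrac1{\eta_\lambda}D(\lambda_t\parallel\lambda_{t-1})$ using the $1$-strong convexity of the negative entropy in $\|\cdot\|_1$, i.e.\ $D(\lambda_t\parallel\lambda_{t-1})\ge\tfrac12\|\lambda_t-\lambda_{t-1}\|_1^2$, so that $\langle\delta_t,\lambda_t-\lambda_{t-1}\rangle-\tfrac1{\eta_\lambda}D(\lambda_t\parallel\lambda_{t-1})\le\tfrac{\eta_\lambda}{2}\|\delta_t\|_\infty^2$, whose expectation is of order $\eta_\lambda\sigma^2/m$ by Assumption \ref{assmp:bounded-variance}. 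The second piece equals $F(\theta_{t-1},\lambda_t)-F(\theta_{t-1},\lambda_{t-1})$, which I convert into the stated $\Ebb_t[F(\theta_t,\lambda_t)]-F(\theta_{t-1},\lambda_{t-1})$ by one more $G$-Lipschitz step in $\theta$ (the source of the extra $+1$). Collecting the drift bound, the telescoping Bregman terms, the variance term, and this $F$-increment gives the claim.

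The hard part is the regret piece, not the drift. Because $\lambda_t$ is coupled to the noise $\delta_t$, the naive two-point mirror-ascent bound would leave a term of order $\eta_\lambda\|g_t\|_\infty^2\sim\eta_\lambda G^2$ instead of the variance $\eta_\lambda\sigma^2/m$, and no such $\eta_\lambda G^2$ term appears in the target. The key is therefore to retain the negative $-\tfrac1{\eta_\lambda}D(\lambda_t\parallel\lambda_{t-1})$ produced by the three-point identity and to spend it, through strong convexity in the $\ell_1$ geometry, precisely on the noise residual; this isolates $\sigma^2/m$ and lets the deterministic decrease telescope through $F(\theta_t,\lambda_t)-F(\theta_{t-1},\lambda_{t-1})$. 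A secondary subtlety is the conditioning bookkeeping, so that $\Ebb_t[\delta_t]=0$ can be invoked against the history-measurable quantities $\lambda^\star(\theta_\tau)$ and $\lambda_{t-1}$ — which is exactly why the reference is taken at the earlier time $\tau$.
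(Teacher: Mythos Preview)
Your proof is correct and tracks the paper's argument closely: the same drift/regret split at the reference $\lambda^\star(\theta_\tau)$, the same three-point (Pythagorean) inequality for the mirror step combined with $1$-strong convexity of the negative entropy in $\|\cdot\|_1$ to isolate the variance $\sigma^2/m$, and the same Lipschitz-in-$\theta$ estimates producing the $(2t-2\tau-1)$ count. The one substantive difference is that you exploit the \emph{linearity} of $F(\theta,\cdot)$ in $\lambda$ to write the regret increment exactly, whereas the paper argues via concavity plus $L$-smoothness and therefore needs the step-size condition $\eta_\lambda\le 1/(2L)$ to absorb an extra $L\,D(\lambda_t\parallel\lambda_{t-1})$ term; your shortcut is legitimate here and slightly cleaner. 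One small correction: your stated reason for taking $\tau\le t-1$ --- to ``keep the reference measurable with respect to the history'' --- is not really the point, since $\lambda^\star(\theta_{t-1})$ is already $\sigma(B_1,\dots,B_{t-1})$-measurable and would work just as well for the conditioning; the free $\tau$ is there to enable the block telescoping in the subsequent lemma, where a \emph{single} reference is held fixed across a whole block of iterations.
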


\begin{proof}
For any $\lambda \in \Lambda_{k^2}$, the convexity of $\Lambda_{k^2}$ and the update formula of $\lambda_t$ imply that
\[(\lambda - \lambda_t)^\top (\nabla \Phi(\lambda_t) - \nabla\Phi(\lambda_{t-1}) - \eta_\lambda \nabla_\lambda F(\theta_{t-1}, \lambda_{t-1}; B_t)) \geq 0.\]
By the definition of Bregman divergence, we have
\[
\eta_\lambda(\lambda - \lambda_t)^\top \nabla_\lambda F(\theta_{t-1}, \lambda_{t-1}; B_t) \leq D(\lambda \parallel \lambda_{t-1}) - D(\lambda \parallel \lambda_t) - D(\lambda_t \parallel \lambda_{t-1}).
\]
Rearranging the inequality yields that
\begin{multline*}
\eta_\lambda(\lambda - \lambda_{t-1})^\top \nabla_\lambda F(\theta_{t-1}, \lambda_{t-1}; B_t) + \eta_\lambda(\lambda_{t-1} - \lambda_t)^\top \nabla_\lambda F(\theta_{t-1}, \lambda_{t-1}; S)\\
\leq D(\lambda \parallel \lambda_{t-1}) - D(\lambda \parallel \lambda_t) - D(\lambda_t \parallel \lambda_{t-1}) + \eta_\lambda(\lambda_{t-1} - \lambda_t)^\top (\nabla_\lambda F(\theta_{t-1}, \lambda_{t-1}; S) - \nabla F(\theta_{t-1}, \lambda_{t-1}; B_t)).
\end{multline*}
Using the Young's inequality and the $1$-strong convexity of $\Phi$, we have
\begin{align*}
& \eta_\lambda(\lambda_{t-1} - \lambda_t)^\top (\nabla_\lambda F(\theta_{t-1}, \lambda_{t-1}; S) - \nabla F(\theta_{t-1}, \lambda_{t-1}; B_t))\\
\leq & \frac{\|\lambda_t - \lambda_{t-1}\|_1^2}{4} + \eta_\lambda^2\|\nabla_\lambda F(\theta_{t-1}, \lambda_{t-1}; S) - \nabla F(\theta_{t-1}, \lambda_{t-1}; B_t)\|_\infty^2\\
\leq & \frac{D(\lambda_t\parallel\lambda_{t-1})}{2} + \eta_\lambda^2\|\nabla_\lambda F(\theta_{t-1}, \lambda_{t-1}; S) - \nabla F(\theta_{t-1}, \lambda_{t-1}; B_t)\|_\infty^2.
\end{align*}
Taking an expectation of both sides of the above inequality, conditioned on $(\theta_{t-1}, \lambda_{t-1})$, together with Assumption \ref{assmp:bounded-variance} yields that
\begin{multline*}\label{eq:pythagorean-by-bregman}
\eta_\lambda(\lambda - \lambda_{t-1})^\top \nabla_\lambda F(\theta_{t-1}, \lambda_{t-1}; S) + \Ebb_t[ \eta_\lambda(\lambda_{t-1} - \lambda_t)^\top \nabla_\lambda F(\theta_{t-1}, \lambda_{t-1}; S)] \\
\leq D(\lambda \parallel \lambda_{t-1}) - \Ebb_t[D(\lambda \parallel \lambda_t)] - \frac{1}{2}\Ebb_t[D(\lambda_t \parallel \lambda_{t-1})]  + \frac{\eta_\lambda^2\sigma^2}{m} \numberthis
\end{multline*}
Since $F(\theta_{t-1}, \cdot; S)$ is concave, we have
\[F(\theta_{t-1}, \lambda; S) \leq F(\theta_{t-1}, \lambda_{t-1}; S) + (\lambda - \lambda_{t-1})^\top \nabla_\lambda F(\theta_{t-1}, \lambda_{t-1}; S).\]
Since $F(\theta_{t-1}, \cdot; S)$ is $L$-smooth w.r.t. $\|\cdot\|_1$ and $\Phi$ is $1$-strongly convex w.r.t. $\|\cdot\|_1$, we have
\begin{align*}
- F(\theta_{t-1}, \lambda_t; S) \leq & - F(\theta_{t-1}, \lambda_{t-1}; S) - (\lambda_t - \lambda_{t-1})^\top \nabla_\lambda F(\theta_{t-1}, \lambda_{t-1}; S) + \frac{L}{2}\|\lambda_t - \lambda_{t-1}\|_1^2 \\
\leq & - F(\theta_{t-1}, \lambda_{t-1}; S) - (\lambda_t - \lambda_{t-1})^\top \nabla_\lambda F(\theta_{t-1}, \lambda_{t-1}; S) + L D(\lambda_t\parallel\lambda_{t-1}).
\end{align*}
Summing up the above two inequalities and taking conditional expectation on both sides, we have
\begin{multline*}
F(\theta_{t-1}, \lambda; S) - \Ebb_t[F(\theta_{t-1}, \lambda_t; S)] \leq (\lambda - \lambda_{t-1})^\top \nabla_\lambda F(\theta_{t-1}, \lambda_{t-1}; S)\\
+ \Ebb_t[(\lambda_{t-1} - \lambda_t)^\top \nabla_\lambda F(\theta_{t-1}, \lambda_{t-1}; S)] + L\Ebb_t[ D(\lambda_t\parallel\lambda_{t-1})].    
\end{multline*}
By picking $\eta_\lambda \leq \frac{1}{2L}$ and putting Eq. \eqref{eq:pythagorean-by-bregman} into the above inequality, we have
\begin{equation}\label{eq:dual-telescoping}
F(\theta_{t-1}, \lambda) - \Ebb_t[F(\theta_{t-1}, \lambda_t)] \leq \frac{1}{\eta_\lambda}(D(\lambda\parallel\lambda_{t-1}) - \Ebb_t[D(\lambda\parallel\lambda_t)]) + \frac{\eta_\lambda^2\sigma^2}{m}.
\end{equation}
Plugging $\lambda = \lambda^\star(\theta_\tau)$ ($\tau \leq t - 1$) in the above inequality yields that
\[F(\theta_{t-1}, \lambda^\star(\theta_\tau)) - \Ebb_t[F(\theta_{t-1}, \lambda_t)] \leq \frac{1}{\eta_\lambda}(D(\lambda^\star(\theta_\tau)\parallel\lambda_{t-1}) - \Ebb_t[D(\lambda^\star(\theta_\tau)\parallel\lambda_t)]) + \frac{\eta_\lambda^2\sigma^2}{m}.\]
By the definition of $\Delta_{t-1}$, we have
\begin{multline*}\label{eq:intermediate-bound}
\Delta_{t-1} \leq (F(\theta_{t-1}, \lambda^\star(\theta_{t-1})) - F(\theta_{t-1}, \lambda^\star(\theta_\tau))) + (\Ebb_t[F(\theta_t, \lambda_t)] - F(\theta_{t-1}, \lambda_{t-1})) + (\Ebb_t[F(\theta_{t-1}, \lambda_t)] - \Ebb_t[F(\theta_t, \lambda_t)]) \\
+ \frac{1}{\eta_\lambda}(D(\lambda^\star(\theta_\tau)\parallel\lambda_{t-1}) - \Ebb_t[D(\lambda^\star(\theta_\tau)\parallel\lambda_t)]) + \frac{\eta_\lambda^2\sigma^2}{m}. \numberthis
\end{multline*}
By the optimality of $\lambda^\star(\theta_\tau)$, we have $F(\theta_\tau, \lambda^\star(\theta_\tau)) \geq F(\theta_\tau, \lambda)$ for all $\lambda \in \Lambda_{k^2}$. It implies that
\begin{align*}
F(\theta_{t-1}, \lambda^\star(\theta_{t-1})) - F(\theta_{t-1}, \lambda^\star(\theta_\tau)) = & F(\theta_{t-1}, \lambda^\star(\theta_{t-1})) - F(\theta_\tau, \lambda^\star(\theta_{t-1})) + F(\theta_\tau, \lambda^\star(\theta_{t-1})) - F(\theta_{t-1}, \lambda^\star(\theta_\tau))\\
\leq & F(\theta_{t-1}, \lambda^\star(\theta_{t-1})) - F(\theta_\tau, \lambda^\star(\theta_{t-1})) + F(\theta_\tau, \lambda^\star(\theta_\tau)) - F(\theta_{t-1}, \lambda^\star(\theta_\tau)).
\end{align*}
By the $G$-Lipschitz continuity of $F(\cdot, \lambda)$ and Assumption \ref{assmp:bounded-variance}, we have
\begin{align*}
F(\theta_{t-1}, \lambda^\star(\theta_{t-1})) - F(\theta_\tau, \lambda^\star(\theta_{t-1})) \leq & G\Ebb_t[\|\theta_{t-1} - \theta_\tau\|_2]\leq \eta_\theta G\sqrt{G^2 + \sigma^2/m}(t-1-\tau),\\
F(\theta_\tau, \lambda^\star(\theta_\tau)) - F(\theta_{t-1}, \lambda^\star(\theta_\tau)) \leq & G\Ebb_t[\|\theta_\tau - \theta_{t-1}\|_2]\leq \eta_\theta G\sqrt{G^2 + \sigma^2/m}(t-1-\tau),\\
\Ebb_t[F(\theta_{t-1}, \lambda_t) - F(\theta_t, \lambda_t)] \leq & G\Ebb_t[\|\theta_{t-1} - \theta_t\|_2] \leq \eta_\theta G\sqrt{G^2 + \sigma^2/m}. 
\end{align*}
Putting theses pieces together and taking expectation on both sides of Eq. \eqref{eq:intermediate-bound} yields the desired inequality.
\end{proof}

Based on the recursive relation in Lemma \ref{lem:descent-on-dual}, the next lemma provides an upper bound for $\frac{1}{T+1}\sum_{t=0}^T\Delta_t$ using both telescoping and localization technique.

\begin{lemma}\label{lem:error-on-dual}
Let $B \leq T + 1$ and assume $(T+1)/B$ is an integer. The following statement holds true
\[\frac{1}{T+1}\sum_{t=0}^T\Delta_t \leq \eta_\theta G\sqrt{G^2 + \sigma^2/m}(B+1) + \frac{\widehat{\Delta}_0}{T+1} + \frac{D^2}{\eta_\lambda B} + \frac{\eta_\lambda\sigma^2}{m}.\]
\end{lemma}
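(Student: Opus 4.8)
The plan is to exploit the freedom in the anchor index $\tau$ of Lemma \ref{lem:descent-on-dual} through a block decomposition combined with telescoping, the so-called \emph{localization} technique. Partition the index set $\{0,1,\dots,T\}$ into $(T+1)/B$ consecutive blocks of length $B$, the $j$-th block being $\{jB, jB+1,\dots,(j+1)B-1\}$. Within each block I fix the anchor $\tau$ to be the block's starting index $jB$ and apply Lemma \ref{lem:descent-on-dual} to every $\Delta_{t-1}$ with $t-1$ ranging over that block (so $t-1\ge\tau$, as the lemma requires). The reason for resetting $\tau$ at each block, rather than using a single global anchor, is that the term $\eta_\theta G\sqrt{G^2+\sigma^2/m}(2t-2\tau-1)$ grows linearly in $t-\tau$: a global anchor would make its sum scale like $T^2$ and fail to vanish after averaging, whereas localizing keeps $t-\tau\le B-1$ inside each block.

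Fixing the block anchored at $\tau$, I sum the bound of Lemma \ref{lem:descent-on-dual} over the indices $k=\tau,\dots,\tau+B-1$ (reading $\Delta_k$ via $t=k+1$). Four contributions appear. The arithmetic term sums via $\sum_{i=0}^{B-1}(2i+1)=B^2$, giving $\eta_\theta G\sqrt{G^2+\sigma^2/m}\,B^2$. The consecutive differences $\Ebb[F(\theta_{k+1},\lambda_{k+1})]-F(\theta_k,\lambda_k)$ telescope to $\Ebb[F(\theta_{\tau+B},\lambda_{\tau+B})]-F(\theta_\tau,\lambda_\tau)$. The Bregman terms telescope to $\frac{1}{\eta_\lambda}(D(\lambda^\star(\theta_\tau)\parallel\lambda_\tau)-\Ebb[D(\lambda^\star(\theta_\tau)\parallel\lambda_{\tau+B})])$; dropping the nonnegative subtracted divergence and invoking the diameter bound $D(\lambda^\star(\theta_\tau)\parallel\lambda_\tau)\le D^2$ leaves $D^2/\eta_\lambda$. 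Here it is essential that the anchor $\lambda^\star(\theta_\tau)$ is held fixed across the whole block, which is precisely what makes these divergences telescope. The variance term contributes $B\eta_\lambda^2\sigma^2/m$.

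Summing the per-block bounds over all $(T+1)/B$ blocks, the arithmetic terms give $(T+1)B\,\eta_\theta G\sqrt{G^2+\sigma^2/m}$, the variance terms give $(T+1)\eta_\lambda^2\sigma^2/m$, and the Bregman terms give $\frac{T+1}{B}\cdot\frac{D^2}{\eta_\lambda}$. The crucial observation is that the already-telescoped $F$-differences telescope a second time across blocks, since the endpoint $\theta_{(j+1)B}$ of block $j$ is the starting point of block $j+1$; the whole sum collapses to a single boundary term $\Ebb[F(\theta_{T+1},\lambda_{T+1})]-F(\theta_0,\lambda_0)$, which I denote $\widehat{\Delta}_0$. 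Dividing through by $T+1$ yields $B\eta_\theta G\sqrt{G^2+\sigma^2/m}+\widehat{\Delta}_0/(T+1)+D^2/(\eta_\lambda B)+\eta_\lambda^2\sigma^2/m$. Finally, using $B\le B+1$ and, since $\eta_\lambda\le 1$, the estimate $\eta_\lambda^2\le\eta_\lambda$ to replace $\eta_\lambda^2\sigma^2/m$ by $\eta_\lambda\sigma^2/m$ gives the claimed inequality.

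I expect the main obstacle to be the simultaneous bookkeeping of the two telescopings: the Bregman divergences must collapse \emph{within} each block (which forces a single fixed anchor per block), while the $F$-values must collapse \emph{across} blocks (which forces the blocks to abut). Verifying that one partition is compatible with both telescopings — and that the $B^2$-versus-$D^2/B$ tension created by localization is genuinely balanced once $B$ is chosen optimally — is the conceptual heart of the argument; the remaining summations are routine.
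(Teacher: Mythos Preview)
Your block-localization and double-telescoping argument is exactly the paper's proof: partition into blocks of length $B$, anchor each block at its left endpoint, telescope the Bregman terms within each block and the $F$-differences across blocks, and sum the arithmetic series to $B^2$.

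The one substantive discrepancy is your handling of the final boundary term. You \emph{define} $\widehat{\Delta}_0 := \Ebb[F(\theta_{T+1},\lambda_{T+1})]-F(\theta_0,\lambda_0)$ and then pad $B$ to $B+1$ by the trivial inequality $B\le B+1$. The paper instead splits that boundary term once more via Lipschitz continuity in $\theta$:
\[
\Ebb[F(\theta_{T+1},\lambda_{T+1})]-\Ebb[F(\theta_0,\lambda_0)]
\;\le\; \eta_\theta G\sqrt{G^2+\sigma^2/m}\,(T+1) \;+\; \widehat{\Delta}_0,
\]
where $\widehat{\Delta}_0$ now bounds $\Ebb[F(\theta_0,\lambda_{T+1})]-F(\theta_0,\lambda_0)\le P(\theta_0)-F(\theta_0,\lambda_0)$, an \emph{initial}, trajectory-independent constant. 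After dividing by $T+1$, the first piece contributes precisely the ``$+1$'' in $(B+1)$. This is not cosmetic: $\widehat{\Delta}_0$ reappears as a problem constant in the complexity bound of Theorem~\ref{thm:convergence}, so it must not depend on $T$ or on the random run of the algorithm, which your version does. With your definition the lemma is formally true but cannot feed into the downstream complexity estimate.
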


\begin{proof}
We divide $\{\Delta_t\}_{t=0}^T$ into several blocks in which each block contains at most $B$ terms, given by
\[\{\Delta_t\}_{t=0}^{B-1}, \{\Delta_t\}_{t=B}^{2B-1}, \cdots, \{\Delta_t\}_{t=T-B+1}^T.\]
Then we have
\begin{equation}\label{eq:block-sum-1}
\frac{1}{T+1}\sum_{t=0}^T\Delta_t \leq \frac{B}{T+1}\Big(\sum_{i=0}^{(T+1)/B-1} \Big(\frac{1}{B}\sum_{t=iB}^{(i+1)B-1} \Delta_t\Big)\Big).
\end{equation}
Furthermore, letting $\tau=iB$ in the inequality in Lemma \ref{lem:descent-on-dual} yields that
\begin{align*}
\sum_{t=iB}^{(i+1)B - 1}\Delta_t \leq & \eta_\theta G\sqrt{G^2+ \sigma^2/m}B^2 + \frac{\eta_\lambda^2 \sigma^2B}{m} + \frac{1}{\eta_\lambda}D(\lambda^\star(\theta_{iB})\parallel\lambda_{iB}) + (\Ebb[F(\theta_{(i+1)B}, \lambda_{(i+1)B})] - \Ebb[F(\theta_{iB}, \lambda_{iB})])\\
\leq & \eta_\theta G\sqrt{G^2+ \sigma^2/m}B^2 + \frac{\eta_\lambda^2 \sigma^2B}{m} + \frac{D^2}{\eta_\lambda} + (\Ebb[F(\theta_{(i+1)B}, \lambda_{(i+1)B})] - \Ebb[F(\theta_{iB}, \lambda_{iB})])
\end{align*}
Plugging the above inequality into Eq. \eqref{eq:block-sum-1} yields
\begin{equation}\label{eq:block-sum-2}
\frac{1}{T+1}\sum_{t=0}^T\Delta_t \leq \eta_\theta G\sqrt{G^2+ \sigma^2/m}B + \frac{\eta_\lambda^2 \sigma^2}{m}  + \frac{D^2}{\eta_\lambda B} + \frac{\Ebb[F(\theta_{T+1}, \lambda_{T+1})] - \Ebb[F(\theta_0, \lambda_0)]}{T+1}.
\end{equation}
Since $F(\cdot, \lambda)$ is $G$-Lipschitz continuous, we have
\begin{align*}\Ebb[F(\theta_{T+1}, \lambda_{T+1})] - \Ebb[F(\theta_0, \lambda_0)] \leq & \Ebb[F(\theta_{T+1}, \lambda_{T+1})] - \Ebb[F(\theta_0, \lambda_{T+1})] + \Ebb[F(\theta_0, \lambda_{T+1})] - \Ebb[F(\theta_0, \lambda_0)]\\
\leq & \eta_\theta G\sqrt{G^2+\sigma^2/m}(T+1) + \widehat{\Delta}_0.
\end{align*}
Plugging the above inequality into Eq. \eqref{eq:block-sum-2} yields the desired inequality.
\end{proof}

Now we are at the position to formally state Theorem \ref{thm:convergence}. In particular, 

\begin{theorem}[Theorem \ref{thm:convergence} restated]
Let Assumption \ref{assmp:lipschitz-continuity}, \ref{assmp:smoothness}, \ref{assmp:bounded-variance} hold true. Let the step sizes be chosen as $\eta_\theta = \Theta(\epsilon^4 / (L^3 D^2 (G^2 + \sigma^2/m)))$ and $\eta_\lambda = \Theta(\epsilon^2/(L\sigma^2/m))$. Then, after $T$ iteration, the output $\theta_\tau$ of Algorithm \ref{alg:rmw-fair-auc} satisfies The iteration complexity of Algorithm \ref{alg:rmw-fair-auc} to return an $\epsilon$-stationary point is bounded by
\[
\Ebb[\|\nabla P_{1/2L}(\theta_\tau; S)\|_2^2] \leq \frac{4\widehat{\Delta}_P}{\eta_\theta(T+1)} + \frac{8L\widehat{\Delta}_0}{T+1} + \frac{3\epsilon^2}{4}.
\]
Furthermore, there exists $\hat{\theta}$ such that $\min_{\xi \in \partial P(\hat{\theta})} \|\xi\|_2 \leq \epsilon$ and $\|\theta_\tau - \hat{\theta}\|_2 \leq \epsilon / 2L$ by picking 
\[T = \Ocal\Big(\Big(\frac{L^3(G^2 + \sigma^2/m) D^2 \widehat{\Delta}_P}{\epsilon^4} + \frac{L^3D^2\widehat{\Delta}_0}{\epsilon^2}\Big)\max\Big\{1, \frac{\sigma^2}{\epsilon^2}\Big\}\Big).\]
\end{theorem}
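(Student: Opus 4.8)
The plan is to chain the three descent-type results—the primal descent inequality (Lemma \ref{lem:descent-on-primal}), the recursive dual bound (Lemma \ref{lem:descent-on-dual}), and its averaged consequence (Lemma \ref{lem:error-on-dual})—into a single estimate on the averaged Moreau-envelope gradient, and then convert this estimate into a primal stationarity guarantee via Lemma \ref{lem:approximate-stationary-point}. First I would sum the primal descent inequality of Lemma \ref{lem:descent-on-primal} over $t=1,\dots,T+1$. The $P_{1/2L}(\theta_t)$ terms telescope, collapsing to $\Ebb[P_{1/2L}(\theta_0)] - \Ebb[P_{1/2L}(\theta_{T+1})] \le \widehat{\Delta}_P$, the initial optimality gap on the Moreau envelope. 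Rearranging and dividing by $\eta_\theta(T+1)/4$ yields
\begin{equation*}
\frac{1}{T+1}\sum_{t=0}^{T} \Ebb[\|\nabla P_{1/2L}(\theta_t;S)\|_2^2] \le \frac{4\widehat{\Delta}_P}{\eta_\theta(T+1)} + 8L \cdot \frac{1}{T+1}\sum_{t=0}^T \Delta_t + 4\eta_\theta L^2\Big(G^2 + \frac{\sigma^2}{m}\Big).
\end{equation*}
Since $\theta_\tau$ is drawn uniformly from $\{\theta_t\}_{t=1}^T$, the left-hand side equals $\Ebb[\|\nabla P_{1/2L}(\theta_\tau;S)\|_2^2]$, precisely the quantity to be bounded.

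Second, I would substitute the averaged dual-error bound of Lemma \ref{lem:error-on-dual} in place of $\frac{1}{T+1}\sum_t \Delta_t$. This introduces the free block size $B$ and leaves four residual contributions: a block term proportional to $\eta_\theta L \sqrt{G^2+\sigma^2/m}\,(B+1)$, the dual-initialization term $\frac{L\widehat{\Delta}_0}{T+1}$, a localization term $\frac{L D^2}{\eta_\lambda B}$, and a dual-variance term $\frac{L\eta_\lambda \sigma^2}{m}$. After this substitution every term on the right is explicit in the tunable parameters $\eta_\theta,\eta_\lambda,B,T$.

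Third, and this is the crux, comes the coupled parameter calibration. I would first choose the block size $B$ to balance the only two terms that vary oppositely in $B$, namely the increasing block term and the decreasing localization term $\frac{LD^2}{\eta_\lambda B}$; equating them fixes $B = \Theta\big(D/\sqrt{\eta_\theta\eta_\lambda\sqrt{G^2+\sigma^2/m}}\big)$ up to constants, subject to the integrality constraint $(T+1)/B\in\mathbb{Z}$. I would then set $\eta_\lambda = \Theta(\epsilon^2/(L\sigma^2/m))$ so that the dual-variance term $\frac{L\eta_\lambda\sigma^2}{m}=\Theta(\epsilon^2)$, and $\eta_\theta = \Theta(\epsilon^4/(L^3 D^2(G^2+\sigma^2/m)))$ so that both the primal-variance term $\eta_\theta L^2(G^2+\sigma^2/m)$ and the (now $B$-optimized) block contribution fall to $\Theta(\epsilon^2)$. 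With these choices the three ``constant-in-$T$'' residuals together contribute at most $\frac{3\epsilon^2}{4}$, giving exactly
\begin{equation*}
\Ebb[\|\nabla P_{1/2L}(\theta_\tau;S)\|_2^2] \le \frac{4\widehat{\Delta}_P}{\eta_\theta(T+1)} + \frac{8L\widehat{\Delta}_0}{T+1} + \frac{3\epsilon^2}{4}.
\end{equation*}

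Finally, to obtain an $\epsilon$-stationary point I would force the two remaining $\Ocal(1/T)$ terms below $\epsilon^2/4$; solving this for $T$, after back-substituting the chosen $\eta_\theta,\eta_\lambda$ and the factor $\max\{1,\sigma^2/\epsilon^2\}$ arising from the $(B+1)$-versus-$B$ integrality slack, produces the stated iteration complexity. The bound $\Ebb[\|\nabla P_{1/2L}(\theta_\tau;S)\|_2^2]\le\epsilon^2$ gives $\Ebb[\|\nabla P_{1/2L}(\theta_\tau;S)\|_2]\le\epsilon$ by Jensen, and then Lemma \ref{lem:approximate-stationary-point} applied at $\theta_\tau$ yields the near point $\hat\theta$ with $\min_{\xi\in\partial P(\hat\theta)}\|\xi\|_2\le\epsilon$ and $\|\theta_\tau-\hat\theta\|_2\le\epsilon/2L$. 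I expect the main obstacle to be precisely this interlocking calibration: the localization block size $B$ couples $\eta_\theta$ and $\eta_\lambda$ nontrivially, and tracking how the nonconvex-concave structure forces the $\Delta_t$ error to scale—so that the exponents land at $\eta_\theta=\Theta(\epsilon^4)$, $\eta_\lambda=\Theta(\epsilon^2)$, and $T=\Ocal(\epsilon^{-8})$ in the noisy regime—requires careful bookkeeping rather than any single clever inequality.
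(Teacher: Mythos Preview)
Your proposal is correct and follows essentially the same route as the paper: telescope Lemma \ref{lem:descent-on-primal}, plug in Lemma \ref{lem:error-on-dual}, balance the block size $B$ against the localization term, then calibrate $\eta_\theta,\eta_\lambda$ so the residual terms are $\Theta(\epsilon^2)$ and invoke Lemma \ref{lem:approximate-stationary-point}. The only minor discrepancies are cosmetic---your block term should carry an extra factor of $G$ (it is $\eta_\theta G\sqrt{G^2+\sigma^2/m}(B+1)$ before the $8L$ multiplication), and the $\max\{1,\sigma^2/\epsilon^2\}$ factor actually enters through the constraint $\eta_\lambda\le 1/(2L)$ in the paper's choice of $\eta_\lambda$, not through the $(B+1)$ integrality slack---but neither affects the structure of the argument.
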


\begin{proof}[Proof of Theorem \ref{thm:convergence}]
Summing up the inequality in Lemma \ref{lem:descent-on-primal} over $t=1, 2, \cdots T+1$ yields that
\[\Ebb[P_{1/2L}(\theta_{T+1}; S)] \leq \Ebb[P_{1/2L}(\theta_0; S)] + 2\eta_\theta L \sum_{t=0}^T\Delta_t - \frac{\eta_\theta}{4} \sum_{t=0}^T\Ebb[\|\nabla P_{1/2L}(\theta_t; S)\|_2^2] + \eta_\theta^2 L^2(G^2 + \sigma^2/m)(T+1).\]
Combining the above inequality with the inequality in Lemma \ref{lem:error-on-dual} yields that
\begin{multline*}
\Ebb[P_{1/2L}(\theta_{T+1}; S)] \leq \Ebb[P_{1/2L}(\theta_0; S)] + 2\eta_\theta L (T+1) \Big(\eta_\theta G\sqrt{G^2 + \sigma^2/m}(B+1) + \frac{D^2}{\eta_\lambda B} + \frac{\eta_\lambda\sigma^2}{m}\Big) + 2\eta_\theta L\widehat{\Delta}_0\\
- \frac{\eta_\theta}{4} \sum_{t=0}^T\Ebb[\|\nabla P_{1/2L}(\theta_t; S)\|_2^2] + \eta_\theta^2 L^2(G^2 + \sigma^2/m)(T+1).    
\end{multline*}
By the definition of $\widehat{\Delta}_P$, we have
\[
\frac{1}{T+1}\sum_{t=0}^T\Ebb[\|\nabla P_{1/2L}(\theta_t; S)\|_2^2] \leq \frac{4\widehat{\Delta}_P}{\eta_\theta(T+1)} + 8L \Big(\eta_\theta G\sqrt{G^2 + \sigma^2/m}(B+1) + \frac{D^2}{\eta_\lambda B} + \frac{\eta_\lambda\sigma^2}{m}\Big) + \frac{8L\widehat{\Delta}_0}{T+1} + 4\eta_\theta L^2(G^2 + \sigma^2/m).    
\]
Letting $B = \frac{D}{2}\sqrt{\frac{1}{\eta_\theta\eta_\lambda G\sqrt{G^2+\sigma^2/m}}}$, we have
\[
\frac{1}{T+1}\sum_{t=0}^T\Ebb[\|\nabla P_{1/2L}(\theta_t; S)\|_2^2] \leq \frac{4\widehat{\Delta}_P}{\eta_\theta(T+1)} + 16LD \sqrt{\frac{\eta_\theta G\sqrt{G^2 + \sigma^2/m}}{\eta_\lambda}} + \frac{8L\widehat{\Delta}_0}{T+1} + \frac{8\eta_\lambda L^2\sigma^2}{m} + 4\eta_\theta L^2(G^2 + \sigma^2/m).    
\]
By picking $\eta_\theta = \min\big\{\frac{\epsilon^2}{16L(G^2+\sigma^2/m)}, \frac{\epsilon^4}{8192L^3 D^2 G\sqrt{G^2 + \sigma^2/m}}, \frac{\epsilon^6}{65536L^3 D^2(\sigma^2/m) G\sqrt{G^2+\sigma^2/m}}\big\}$ and $\eta_\lambda = \min\big\{\frac{1}{2L}, \frac{\epsilon^2}{16L^2\sigma^2/m}\big\}$, we have
\[
\frac{1}{T+1}\sum_{t=0}^T\Ebb[\|\nabla P_{1/2L}(\theta_t; S)\|_2^2] \leq \frac{4\widehat{\Delta}_P}{\eta_\theta(T+1)} + \frac{8L\widehat{\Delta}_0}{T+1} + \frac{3\epsilon^2}{4}.
\]
The above inequality yields the first claim in the theorem by taking the expectation of $\theta_\tau$ due to the uniform sampling. Furthermore, by Lemma \ref{lem:approximate-stationary-point}, it implies that the number of iterations required by Algorithm \ref{alg:rmw-fair-auc} to return an $\epsilon$-stationary point is bounded by
\[\Ocal\Big(\Big(\frac{L(G^2 + \sigma^2/m) \widehat{\Delta}_P}{\epsilon^4} + \frac{L\widehat{\Delta}_0}{\epsilon^2}\Big)\max\Big\{1, \frac{L^2D^2}{\epsilon^2}, \frac{L^2 D^2 \sigma^2}{\epsilon^4}\Big\}\Big),\]
which gives the same total gradient complexity in the second claim.
\end{proof}

\section{Equal Error Rates for AUC \label{sec:eer-auc}} 

In this section, we describe the equal error rates framework mentioned in empirical evaluations for achieving intra-group and inter-group AUC fairness as in Eq. \eqref{eq:equal-all} and . The goal is to maximizes overall AUC while keeping all the group-level AUC score to be the same as the overall AUC.

\begin{align*}\label{eq:equal-error}
\min_{\theta \in \Theta} & \hat{R}^\ell(\theta)\\
\text{s.t.} & \hat{R}_{z,z'}^\ell(\theta) = \hat{R}^\ell(\theta), z, z'=1, \cdots k. \numberthis
\end{align*}
Let $p_{z,z'} = \frac{n^{z+}n^{z'-}}{n^+n^-}$ be group proportions and it is straightforward to check that $\hat{R}^\ell(\theta) = \sum_{z=1}^{k}\sum_{z'=1}^{k} p_{z,z'} \hat{R}_{z,z'}^\ell(\theta)$. The Lagrangian dual function of Problem \eqref{eq:equal-error} is given by

\begin{align*}
F(\theta,\lambda) = & \lambda_{0,0} \hat{R}^\ell(\theta) + \sum_{z=1}^{k}\sum_{z'=1}^{k}\lambda_{z,z'} (\hat{R}_{z,z'}^\ell(\theta) - \hat{R}^\ell(\theta)) \\
= & (\lambda_{0,0} - \sum_{z=1}^{k}\sum_{z'=1}^{k}\lambda_{z,z'})\hat{R}^\ell(\theta) + \sum_{z=1}^{k}\sum_{z'=1}^{k}\lambda_{z,z'} \hat{R}_{z,z'}^\ell(\theta) \\
= & (\lambda_{0,0} - \sum_{z=1}^{k}\sum_{z'=1}^{k}\lambda_{z,z'})\sum_{z=1}^{k}\sum_{z'=1}^{k} p_{z,z'}\hat{R}_{z, z'}^\ell(\theta) + \sum_{z=1}^{k}\sum_{z'=1}^{k}\lambda_{z,z'} \hat{R}_{z,z'}^\ell(\theta) \\
= &  \sum_{z=1}^{k}\sum_{z'=1}^{k}\Big(\lambda_{z,z'} + \Big(\lambda_{0,0} - \sum_{z=1}^{k}\sum_{z'=1}^{k} \lambda_{z,z'}\Big) p_{z,z'}\Big) \hat{R}_{z,z'}^\ell(\theta) .
\end{align*}
When only intra-group AUCs are considered, we have $\lambda_{z, z'} = 0$ for all $z \neq z'$. When only inter-group AUCs are considered, we have $\lambda_{z, z} = 0$ for all $z = 1, \cdots k$. The empirical average of play converges to a Nash equilibrium, where an equilibrium corresponds to an optimal solution to the original constrained optimization problem \eqref{eq:equal-error}. It is worth noting that our equal error rates framework is different from the constrained optimization framework in \citet{narasimhan2020pairwise}. Firstly, we do not require a pre-defined "fairness level" $\kappa$ in the constraint of Eq. \eqref{eq:equal-error} whereas it is required in their work. Instead we require the group-level AUC to be the same as the overall AUC. This treatment can avoid fixing unrealistic small $\kappa$ and hurting the utility too much. Secondly, our training scheme is based on stochastic gradient descent ascent algorithm which is described in Algorithm \ref{alg:gda-fair-auc}, whereas \citeauthor{narasimhan2020pairwise} use a proxy loss function for updating the group weights. 

\begin{algorithm}[ht!]
\caption{\it EqualAUC\label{alg:gda-fair-auc}}
\begin{algorithmic}[1]
\STATE {\bf Inputs:} Training set $S$ with label $Y$ and protected attribute $Z$, model $f_\theta$, number of iterations $T$, batch size $m$, learning rates $\{\eta_\theta, \eta_\lambda\}$
\STATE {Initialize $\theta \in \Theta$ and $\lambda_{z, z'} = 0$}
\FOR{$t=1$ to $T-1$}
\STATE{$B_t = \texttt{Sample}_m(S)$}
\STATE{$\theta_t = \theta_{t-1} -\eta_\theta (\lambda_{t-1} + (1 - 1^\top\lambda_{t-1})p )^\top \partial_\theta\hat{R}^\ell(\theta_{t-1}; B_t)$}
\STATE{$\lambda_{z,z'} = \lambda_{z,z'} + \eta_\lambda(\hat{R}_{z,z'}^\ell(\theta) - \hat{R}^\ell(\theta))  $}
\ENDFOR
\STATE {\bf Outputs:} Uniform distribution over the set of models $\theta_1, \cdots, \theta_T$
\end{algorithmic}
\end{algorithm}

\section{Experimental Setup and Additional Experiments\label{sec:exp}}

In this section, we describe the empirical evaluation setup in details and provide more experimental results. 

\subsubsection{Implementation Details On Real Datasets.} We summarize implementation details of generating Table \ref{tab:comparison}.

\begin{itemize}
\item The \texttt{Adult}, \texttt{Bank} and \texttt{Compas} datasets are all pre-processed according to \citet{vogel2021learning}. The \texttt{Default} dataset is pre-processed according to \citet{donini2018empirical}. 
\item All experiments were implemented in Python with dependence on standard libraries: \texttt{pandas}, \texttt{NumPy}, \texttt{PyTorch}, \texttt{scikit-learn} and \texttt{matplotlib} for plots. All experiments were implemented on a server with the CPUs as Intel(R) Xeon(R) Silver 4214 @ 2.20GHz and the GPU as NVIDIA GeForce RTX 2080 Ti. 
\item All the hyperparameter are chosen based on 25 runs of the cross-validation with 60\% training, 20\% validation and 20\% testing. In particular, the batch size parameter $m$ is chosen from the candidate set $\{256, 512, 1024, 2048, 4096, 8192\}$ except $m=8192$ is too large to consider for $\texttt{Compas}$. The model stepsize parameter $\eta_\theta$ is chosen from the candidate set $\{0.2, 0.1, 0.05, 0.01\}$ and the stepsize for group weights, $\eta_\lambda$, is determined by $\eta_\lambda = \kappa \times \eta_\theta$ where the ratio $\kappa$ is chosen from the candidate set $\{100, 10, 1, 0.1, 0.01\}$. Furthermore, we also apply a weight decay parameter $\omega$ chosen from the candidate set $\{0.1, 0.01, 0.001\}$. 
\item All fairness-aware algorithms, i.e. \texttt{MinimaxFair}, \texttt{InterFairAUC}, \texttt{EqualAUC} and our \texttt{Algortihm \ref{alg:rmw-fair-auc}} initialize their model parameters $\theta_0$ via the same output of the pure AUC maximization algorithm \texttt{AUCMax} for fair comparison. All algorithms are trained on a simple neural network of 2 hidden layers. Each layer has the same width $d$ (the dimension of the input space), except for the output layer which outputs a real score. We used ReLU's as activation functions after each layer except for the output layer. To center and scale the output score we used batch normalization (BN) with fixed values $\gamma=1, \beta=0$ for the output value of the network. The surrogate loss function $\ell$ is chosen as the logistic loss, i.e. $\ell: s \mapsto \log(1 + \exp(-s))$.
\end{itemize}

\subsubsection{Generation Details For Synthetic Datasets.} Firstly, we discuss the generation of the scoring function in Figure \ref{fig:limitation-inter}. The distribution of the scoring function conditioned on the label $Y$ and protected attribute $Z$, i.e., $f_\theta|Y, Z$, is given as follow
\[
f_\theta |Y=y, Z=z \sim \text{logit}(\Ncal(\mu_{y,z}, \sigma^2_{y,z}))
\]
where logit is the logistic function. For every partition of a $(y,z)$ pair, we generate 1,000 scores. We apply $\mu_{0a} = 0.3, \mu_{1a} = 0.7, \mu_{0b} = 0.0, \mu_{0b} = 1.0$ and $\sigma_{y,z}^2 = 0.5$ for all $y \in \{\pm 1\}, z \in \{a, b\}$. Such choices allow the same gap between $\mu_{1a} - \mu_{0b}$ and $\mu_{1b} - \mu_{0a}$, leading towards fair inter-group AUCs. However, one has $\mu_{1a} - \mu_{0a} < \mu_{1b} - \mu_{0b}$, leading towards unfair intra-group AUCs in this one-dimensional example.

Next we discuss the generation of the two-dimensional dataset in Figure \ref{fig:synthetic}. The distribution of the feature vector $X$ conditioned on the label $Y$ and protected attribute $Z$ is given as follow
\[
X |Y=y, Z=z \sim \Ncal(\mu_{y,z}, \sigma^2_{y,z} \Ibb_{2\times 2})
\]
with $\mu_{0a} = (-1.0, 1.0)^\top, \mu_{1a} = (-1.5, 0.5)^\top, \mu_{0b} = (-2.0, -1.0)^\top, \mu_{1b} = (1.0, 0.0)^\top$ and $\sigma^2_{0a} = \sigma^2_{1a} = 0.5, \sigma^2_{0b} = \sigma^2_{1b} = 1.0$. For every partition of a $(y,z)$ pair, we generate 1,000 data points. Such distribution makes sure both intra-group and inter-group AUC unfairness exist while the intra-group AUC unfairness will be more severe.

\subsubsection{More On Ablation Studies.} We continue on reporting maximin AUC problem \eqref{eq:max-min-auc} with only intra-group AUCs or inter-group AUCs on \texttt{Bank}, \texttt{Compas} and \texttt{Default}. In general, we can roughly see only applying intra-group (resp. inter-group) AUC fairness may help but it is limited towards inter-group (resp. intra-group) AUC fairness.

\begin{figure*}[ht!]
\begin{subfigure}{0.49\linewidth}
    \includegraphics[width=\linewidth]{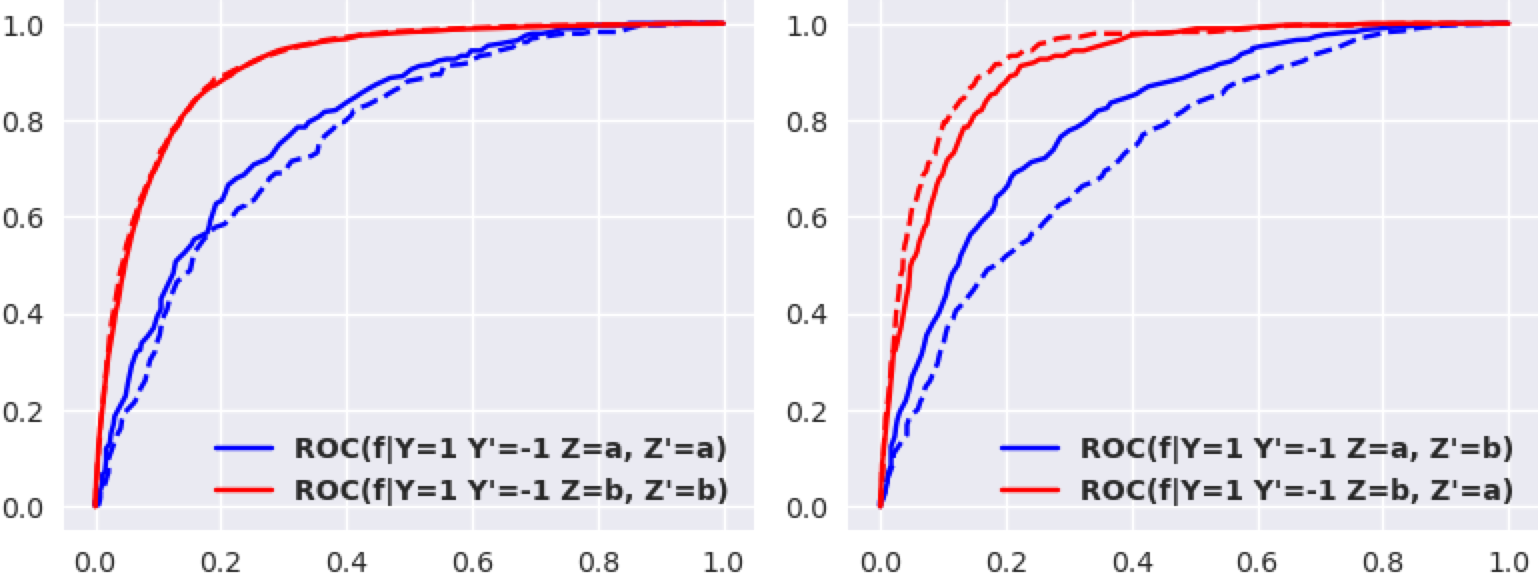}
    \caption{\it Intra-group AUCs only}
\end{subfigure}%
    \hfill%
\begin{subfigure}{0.49\linewidth}
    \includegraphics[width=\linewidth]{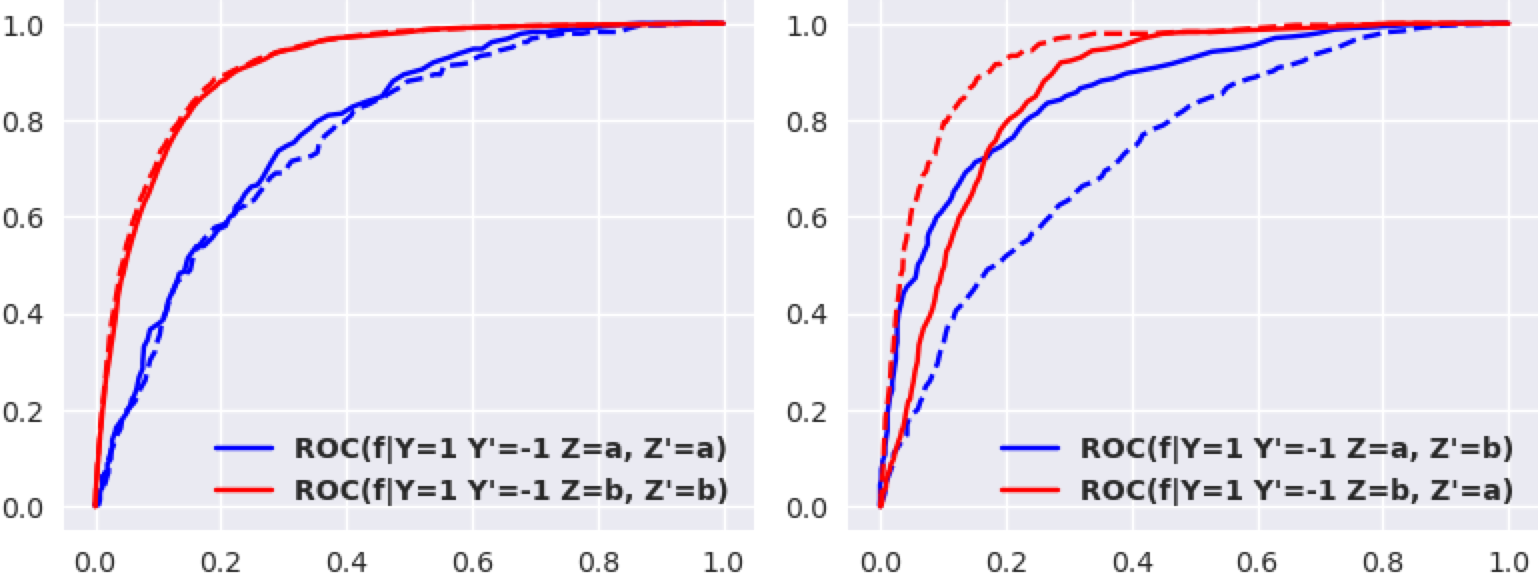}
    \caption{\it Inter-group AUCs only}
\end{subfigure}
\caption{\it Ablation study on \texttt{Bank}.}
\label{fig:bank-ablation}
\end{figure*}

\begin{figure*}[!ht]
\begin{subfigure}{0.49\linewidth}
    \includegraphics[width=\linewidth]{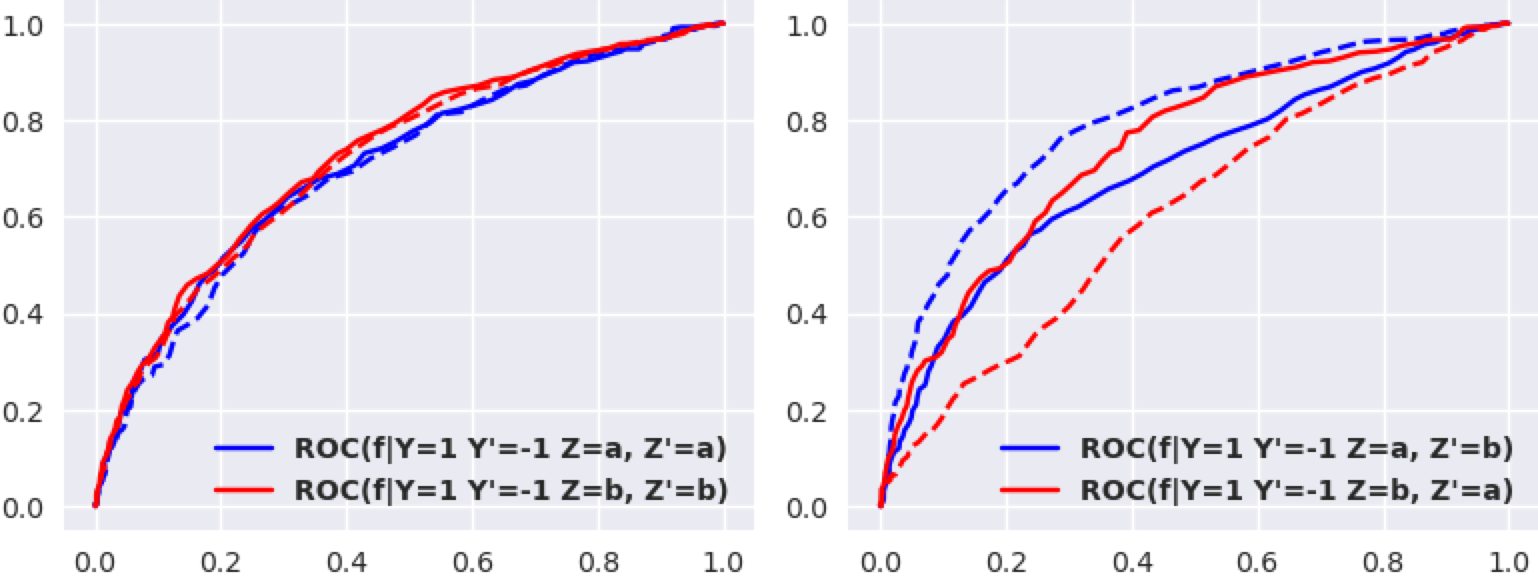}
    \caption{\it Intra-group AUCs only}
\end{subfigure}%
    \hfill%
\begin{subfigure}{0.49\linewidth}
    \includegraphics[width=\linewidth]{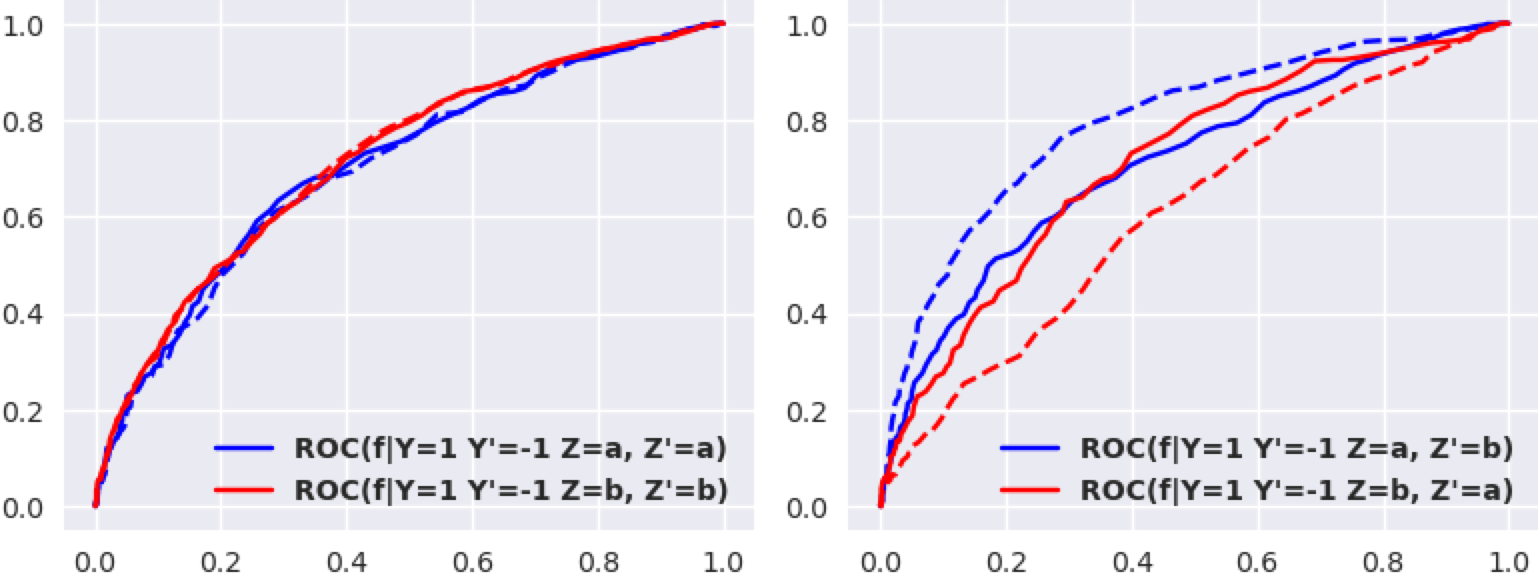}
    \caption{\it Inter-group AUCs only}
\end{subfigure}
\caption{\it Ablation study on \texttt{COMPAS}.}
\label{fig:compas-ablation}
\end{figure*}

\begin{figure*}[ht!]
\begin{subfigure}{0.49\linewidth}
    \includegraphics[width=\linewidth]{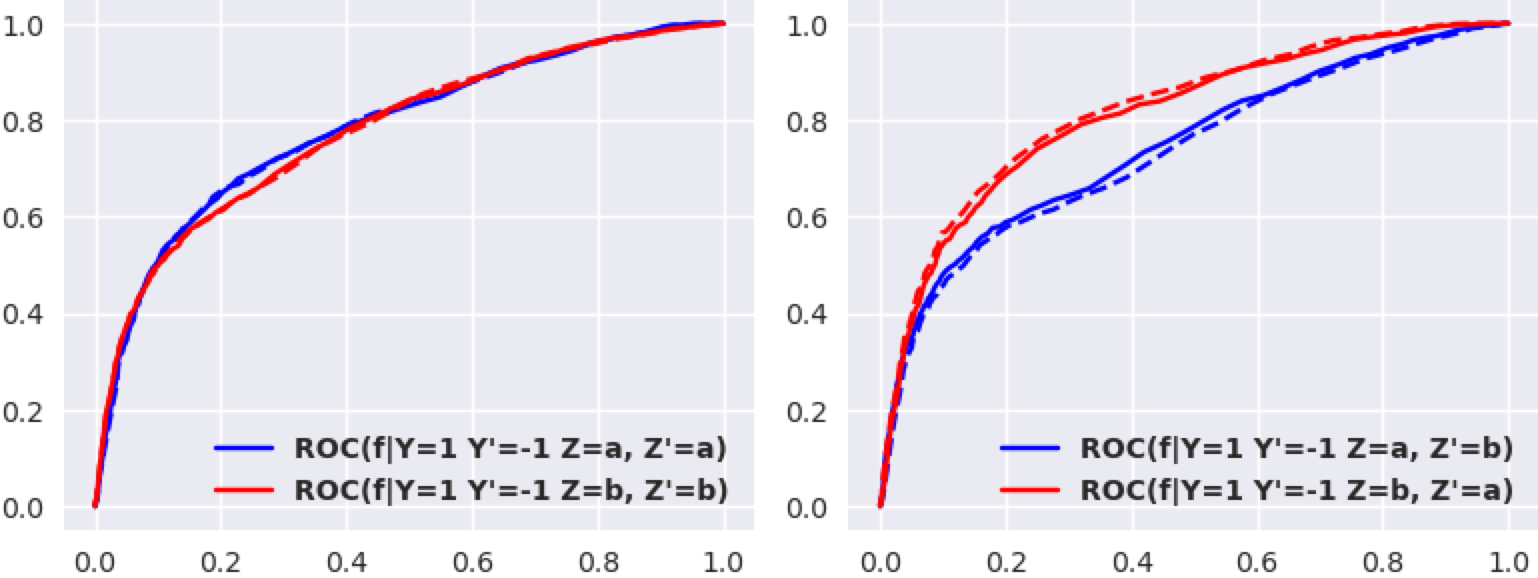}
    \caption{\it Intra-group AUCs only}
\end{subfigure}%
    \hfill%
\begin{subfigure}{0.49\linewidth}
    \includegraphics[width=\linewidth]{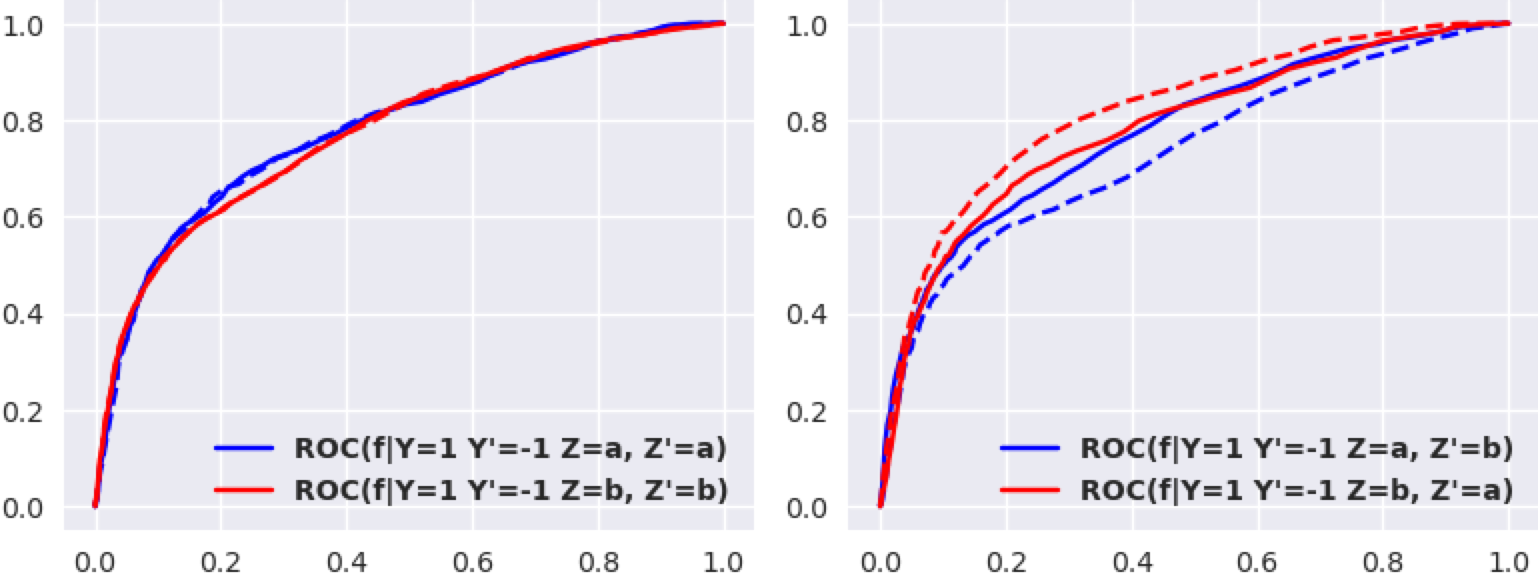}
    \caption{\it Inter-group AUCs only}
\end{subfigure}
\caption{\it Ablation study on \texttt{Default}.}
\label{fig:default-ablation}
\end{figure*}

\end{document}